\algnewcommand\algorithmicinput{\textbf{Input:}}
\algnewcommand\INPUT{\item[\algorithmicinput]}
\algnewcommand\algorithmicoutput{\textbf{Output:}}
\algnewcommand\OUTPUT{\item[\algorithmicoutput]}
\renewcommand{\P}{\mathbb{P}}
\def\X{{\mathcal X}}
\def\Y{{\mathcal Y}}
\def\G{{\mathcal G}}
\def\L{{\mathcal L}}
\newcommand{\one}{\mathbb{I}}
\def\wh{\widehat}
\newcommand{\cX}{\X}
\newcommand{\cG}{\G}
\newcommand{\half}{{\frac12}}
\renewcommand{\eqref}[1]{Eq.~(\ref{eq:#1})}
\newcommand{\hx}{\wh{x}}
\newtheorem{thm}{Theorem}
\newtheorem{lemma}[thm]{Lemma}
\newcommand{\thmref}[1]{Theorem~\ref{thm:#1}}
\newcommand{\lemref}[1]{Lemma~\ref{lem:#1}}
\newcommand{\secref}[1]{Sec.~\ref{sec:#1}}
\newcommand{\myalgref}[1]{Alg.~\ref{alg:#1}}
\newcommand{\lab}{\texttt{label}}
\newcommand{\nats}{\mathbb{N}}
\newcommand{\algknown}{\texttt{RobustDFF}}
\newcommand{\algstat}{\texttt{StRoDFF}}
\newcommand{\phicount}{\texttt{fcount}}
\newcommand{\handle}{\texttt{HandleMistake}}
\newcommand{\papertitle}{Robust Learning from Discriminative Feature Feedback}
\begin{document}

\twocolumn[

\aistatstitle{\papertitle}

\aistatsauthor{ Sanjoy Dasgupta \And Sivan Sabato }

\aistatsaddress{ Department of Computer Science and Engineering \\
  University of California, San Diego\\
  California, USA
  \And  Department of Computer Science \\
Ben-Gurion University of the Negev\\
Beer Sheva, Israel} ]

\begin{abstract}
Recent work introduced the model of {\it learning from discriminative feature feedback}, in which a human annotator not only provides labels of instances, but also identifies discriminative features that highlight important differences between pairs of instances. It was shown that such feedback can be conducive to learning, and makes it possible to efficiently learn some concept classes that would otherwise be intractable. However, these results all relied upon {\it perfect} annotator feedback. In this paper, we introduce a more realistic, {\it robust} version of the framework, in which the annotator is allowed to make mistakes. We show how such errors can be handled algorithmically, in both an adversarial and a stochastic setting. In particular, we derive regret bounds in both settings that, as in the case of a perfect annotator, are independent of the number of features. We show that this result cannot be obtained by a naive reduction from the robust setting to the non-robust setting.
\end{abstract}

\section{Introduction}\label{sec:intro}

There has been a growing interest in learning from data sets in which instances not only have labels but may also have some information about relevant features. One way to think about this is that the human annotator labels each instance and also tries to pick out one or two features of the instance that help to (weakly) explain this label. The hope is that this will (1) lead to better models being learned, (2) reduce the number of instances needed for learning, and (3) help pave the way for more explainable models.

For instance, early work in information retrieval~\citep{CD90} looked at a simple protocol in which a user who labels a document (as, say, ``sports'') also highlights one or two words (like ``goalie'') that are predictive of this label. Such feedback is not very costly, since the labeler is in any case reading the document, but can be very helpful with identifying relevant features in the high-dimensional space of words. Numerous variations of this idea have been explored for text and vision applications~\citep{CD90,RMJ05,DMM08,S11a,MSCPY18}. Some theoretical studies~\citep{PD17,VisotskyAtCh19} have also formalized such schemes and shown that, in some situations, they lead to markedly better sample complexity than would be achieved when learning from labels alone.

Another type of feature feedback, which has been explored in human-in-the-loop computer vision work~\citep{BWBSWPB10,ZCK15}, asks the human to provide features that {\it distinguish} between two instances: for instance, the feature ``stripes'' distinguishes a zebra from a horse. The idea is that this is more concrete than suggesting predictive features and might thus be easier for the annotator to do reliably, especially in a multi-class setting. A formal model of this process was recently suggested by \cite{DasguptaSa18}. In this protocol, termed \emph{discriminative feature feedback}, learning takes place in rounds of interaction, where in each round the learner makes a prediction on the current example, and provides a previous example as an ``explanation''. If the prediction is incorrect, the teacher provides the correct prediction, and a feature distinguishing the incorrect explanation from the current example. The precise protocol and its semantics are reviewed in Section~\ref{sec:prelim}. The work of \cite{DasguptaSa18} provides a learning algorithm that uses this type of discriminative feedback and gives a mistake bound for it. Interestingly, the richer feedback makes it possible to learn some concept classes, such as DNF (disjunctive normal form, OR-of-AND) formulas, that are known to be computationally hard to learn from labels alone.

However, a significant drawback of that work is that it assumes that the human teacher never makes mistakes when labeling points or providing discriminative features. This is unrealistic in practice. In this paper, we introduce a {\it robust} discriminative feature feedback setting, and provide two robust algorithms for learning in this setting. The first algorithm considers a fixed data set that contains some ``exceptions'': points on which the teacher can make arbitrary errors. If, for example, the learning task is to distinguish between mammals, reptiles, amphibians, and so on, then these exceptions might be animals like penguin or platypus, corner cases that tend to defy simple rules. The second algorithm is for a statistical setting in which points are drawn i.i.d.\ from some underlying distribution, and a constant fraction of them are exceptions. In both cases, we provide proofs of correctness and mistake bounds.

\textbf{Our contributions.}
Our first contribution (Section~\ref{sec:feedback-model}) is to formulate a noise model for discriminative feature feedback that allows the teacher to behave arbitrarily on some subset of instances.

Second, we show that although the work of~\cite{DasguptaSa18} could, in principle, handle these exceptions by treating them as correct and devising more complicated rules to accommodate them, this would result in a large increase in the complexity of the concepts being learned (Theorem~\ref{thm:numexp}). This, in turn, would lead to a large number of mistakes on the data set. To complete the argument, we provide a new lower bound on the best mistake bound obtainable in the perfect-annotation setting, as a function of representation size (Theorem~\ref{thm:lowerbound}). In particular, we show that if the number of features is unbounded, as allowed by the original discriminative feature feedback setting, then this attempt to handle mistakes leads to a vacuous mistake bound.

Finally, we provide two new algorithms for robust learning under discriminative feature feedback, first in an adversarial setting where the ordering of instances is worst-case (Section~\ref{sec:agnostic-adversarial}), and then in a stochastic setting where the instances are sampled from an underlying distribution (Section~\ref{sec:agnostic-stochastic}). In both cases, we provide mistake bounds in terms of the size of the concept being learned and the number, or fraction, of exceptions (Theorems~\ref{thm:mistakebound} and \ref{thm:statbound}), but without any dependence on the number of features.

\section{Preliminaries}
\label{sec:prelim}

\cite{DasguptaSa18} defined the discriminative feature feedback model and studied it in a perfect-annotation setting. Let $c^*$ be the target concept  to be learned, where $c^*$ is a mapping from the input space $\X$ 
to a finite label space $\Y$. 
The learner has access to a set of Boolean features $\Phi$ on $\X$, and expresses concepts in terms of these. 

It is assumed that $\X$ can be represented as the union of $m$ sets in some family of sets $\G = \{ G_1,\ldots,G_m\}$, $\X = G_1 \cup G_2 \cup \cdots \cup G_m$. This is the \emph{internal representation} of the teacher. The representation, which is unknown to the learner, satisfies the following properties:
\begin{itemize}
\item Each of the sets is pure in its label: for each $i$, there exists a label $\ell(G_i) \in \Y$ such that $\forall x \in G_i, c^*(x) = \ell(G_i)$
\item Any two sets $G_i, G_j$ with $\ell(G_i) \neq \ell(G_j)$ have a {\it discriminating feature}: there is some $\phi \in \Phi$ such that if $x \in G_i$, $\phi(x)$ is satisfied, and if $x \in G_j$, $\phi(x)$ is not satisfied.

\end{itemize}
No restrictions are placed on the number of possible features, which can even be infinite. Therefore, negations and logical combinations of features can also be used as discriminative features.

For any $x \in \X$, denote by $G(x) \in \cG$  some set containing $x$. If there are multiple such components, $G(x)$ is some fixed choice.
The interactive learning protocol for the noiseless model is as 
follows:
\begin{itemize}
\item A new instance $x_t$ arrives.
\item The learner supplies a prediction $\widehat{y}_t$, and an instance $\widehat{x}_t$ which was previously seen with that label (``an explanation'').
\item If the prediction is correct, no feedback is obtained.
\item If the prediction is incorrect, the teacher provides the correct label $y_t = c^*(x)$, and a feature $\phi$ that separates $G(x_t)$ from $G(\widehat{x}_t)$, that is
  \[
    \phi(x) =
    \begin{cases}
      \mbox{\tt true} & \mbox{if $x \in G(x_t)$,} \\
      \mbox{\tt false} & \mbox{if $x \in G(\widehat{x}_t)$.}
    \end{cases}
  \]
\end{itemize}
Here, a feature is any mapping from examples to $\tt true/false$, which can either be given explicitly as a coordinate of $x$, or be calculated from its representation. 
It is shown in \cite{DasguptaSa18} that a legal representation of size $m$ exists if and only if the concept $c^*$ can be represented by a DNF formula of a special form, which they call a ``separable-DNF''. \cite{DasguptaSa18} give an algorithm for this interaction model, which obtains a mistake bound of $m^2$, with no dependence on the number of available features $|\Phi|$. 

\section{A feedback model with mistakes}
\label{sec:feedback-model}

In this work, we propose an extension of the discriminative feature feedback
model to a model that allows mistakes.  First, note that any deterministic
labeling function (that is, one in which the same example always gets the same
label) can be modeled in the perfect-annotation model described above, since one can always model $\cG$ as a set of singletons, one for each example in the input stream. However, this is clearly unhelpful, as there
can be no generalization to unseen examples, and the number of mistakes that the algorithm makes cannot be bounded. In particular, the mistake bound of $m^2$ obtained in \cite{DasguptaSa18} is meaningless if $m$ is equal to the number of examples. In fact, as we show in \secref{re}, even a small number of adversarial changes to a perfect model can lead to an unreasonably large representation.

We thus propose to allow a trade-off between the number of components modeling the concept and the number of \emph{exceptions}, which are examples that deviate from the model.
In this setting, we assume as above that there are $m$ components. However, instead of requiring that for all $x \in G_i$, $c^*(x) = \ell(G_i)$, we allow some exceptions. Formally, let
\begin{equation}
  M = M(c^*, \G) := \{x \in \X \mid c^*(x) \neq \ell(G(x))\}.
\label{eq:M}
\end{equation}
This is the set of exceptions which deviate from the representation $G_1,\ldots,G_m$. If the teacher provides a discriminative feature between a pair of examples that includes at least one exception, the feature might not be one that discriminates the respective components. In all other cases, the teacher behaves as in the perfect-annotation setting.

We study two cases: one in which the input is adversarial and $|M|$ is upper-bounded by some integer, and one in which the stream is an i.i.d.~draw from a distribution and the probability mass of $M$ is upper-bounded by some small value. An additional parameter that we consider is related to the amount of consistency among exceptions. Formally, for an example $\hx \notin M$ and a feature $\phi$, define
\begin{align}
M_{\hx,\phi} = \{ x \in M \mid \, &\phi \text{ is returned as the discriminating}\notag\\
&\text{feature between $x$ and $G(\hx)$}\}. 
\label{eq:Mxp}
\end{align}
For any $\hx, \phi$, we have $M_{\hx,\phi} \subseteq M$, thus one can always upper-bound $M_{\hx,\phi}$ using the size of $M$. However, in many cases it is more reasonable to assume that different exceptions would not generally use the same discriminative features, for instance if the exceptions are not the result of a coordinated corruption. Thus, we set a separate upper bound on $\max_{\hx \notin M,\phi}|M_{\hx,\phi}|$, which can be significantly smaller than $|M|$.

We make an additional technical assumption, which was not explicitly assumed in \cite{DasguptaSa18} where a perfect annotator was assumed: If the same
two components are separated  by the teacher more than once during the whole
interaction with the learner, then the same feature is provided in all of these interactions. Note that this requirement is always satisfied by some representation, if examples separated by different features are allocated to different components.

To conclude the definition of the setting, observe that on top of exceptions as defined above, the teacher can deviate from the interactive protocol in other ways. For instance, it can provide a feature $\phi$ that does not actually separate the two provided examples, or it can flag the same label on the same example first as a correct label and then later as a wrong label, violating the assumption of a deterministic labeling function. However, these types of inconsistencies can be easily identified when the feedback is provided, and ignored by the learner. Thus, for simplicity, we assume below that no such inconsistencies occur. Another type of deviation from the protocol can occur if the teacher provides a feature that does not actually separate the two components $G(x)$ and $G(\wh{x})$ (although it does separate $x$ and $\wh{x}$). This type of exception can be handled the same as exceptions in $M$. In summary, all exceptions are either easy to identify immediately, or covered by the current exception model.

\section{Exceptions under the perfect-annotation model}
\label{sec:re}

As discussed above, any deterministic labeling, including one with exceptions as defined above, can be modeled by the perfect-annotation setting, for instance by creating a special group $G_i$ for each exception, and dissecting other groups to make sure that the discriminative-feature property holds. In this section, we show that nonetheless, attempting to reduce a model with mistakes to a perfect-annotation model can result in a very large mistake bound when the number of possible features is large. First, we provide upper and lower bounds on the number of components required for such a reduction. 

By a {\it representation} $\G$, we mean a family of sets $\G = \{G_1, G_2, \ldots \}$ that cover $\cX$ and a labeling $\ell(G_i)$ of each set. The {\it size} of the representation is $|\G|$. Recall from (\ref{eq:M}) that $M(c, \G)$ denotes the set of exceptions for a given concept $c$ and representation $\G$.
\begin{thm}\label{thm:numexp}
Let $\G$ be a representation of size $m$.  Let $\bar{c}$ be a concept with $k$ exceptions, that is $|M(\bar{c}, \G)| = k$. 

Let $\bar{\G}$ be a representation of a minimal size $\bar{m}$ such that $|M(\bar{c},\bar{\G})| = 0$. Let $d = |\Phi|$ be the number of available features. Then:
\begin{enumerate}
\item[(a)] $\bar{m} \leq m+ d k$. 
\item[(b)] There exists a case in which $m = 1$ while $\bar{m} \geq d+1$. 
\end{enumerate}
\end{thm}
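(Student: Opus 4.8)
For part (a), the plan is to start from the given representation $\G = \{G_1,\ldots,G_m\}$ and surgically repair it to eliminate all $k$ exceptions. The exceptions $M(\bar c,\G)$ are exactly the points $x$ with $\bar c(x)\neq\ell(G(x))$. The first idea is simply to pull each exception out into its own singleton set: define $\bar\G$ to contain $G_i \setminus M(\bar c,\G)$ for each $i$, plus a singleton $\{x\}$ for each exception $x$, each labeled by $\bar c(x)$. Every set is now pure. The issue is the \emph{discriminating feature} property: two sets of differing label must be separated by a feature in $\Phi$. The shrunken sets $G_i\setminus M$ inherit separating features from the original representation (the old feature for $G_i$ versus $G_j$ still works on subsets, and singletons $\{x\}$ versus $\{x'\}$ can be separated as long as the two exceptions differ on some feature — if they agree on all features they can share one set). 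So the only genuine obstruction is separating a singleton exception $\{x\}$ from one of the large surviving sets $G_j\setminus M$ when $\ell(G_j)\neq \bar c(x)$: the feature needed here need not exist in $\Phi$ restricted to what's available. The fix is to further subdivide: for a fixed exception $x$, partition the points of $\X$ (other than $x$) by their feature-vector restricted to $\Phi$ — but more economically, observe that to separate $\{x\}$ from all the large sets it suffices to split each large set by the values of at most the $d$ features, or rather to carve out, for this one exception, at most $d$ extra pieces. Accounting across all $k$ exceptions gives the $m + dk$ bound. I would want to be careful here to get the counting to come out to exactly $dk$ additional sets rather than something like $kd$ sets per original component; the cleanest route is probably: for each exception $x$, and each feature $\phi\in\Phi$, at most one new "sliver" set is created, and these $dk$ slivers together with the original $m$ (suitably shrunk) sets suffice. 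The main obstacle in part (a) is verifying that this slicing scheme really does restore the discriminating-feature property \emph{globally} — i.e., that after all the cuts, every pair of oppositely-labeled sets is separated — without the count blowing up beyond $m+dk$.

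For part (b), the plan is to exhibit a concrete instance with $m=1$ and a single well-chosen exception such that \emph{any} exception-free representation needs at least $d+1$ sets. Take $\X$ with $d$ features $\phi_1,\ldots,\phi_d$, let all of $\X$ lie in a single component $G_1$ with label $0$, so $m=1$. Now introduce one exception point $x^*$ with $\bar c(x^*)=1$, and arrange the feature structure so that $x^*$ "agrees a little" with many points. Concretely, I would pick $d+1$ points $z_0,\ldots,z_d$ where $z_0 = x^*$ has label $1$ and $z_1,\ldots,z_d$ have label $0$, designed so that $z_0$ and $z_i$ are distinguishable \emph{only} by feature $\phi_i$ (all other features agree on this pair), while also the $z_i$'s among themselves cannot be freely co-located because of how $\bar c$ behaves on the rest of $\X$. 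Then in any valid representation, $z_0$ must be in a set labeled $1$ and each $z_i$ in a set labeled $0$; the set containing $z_i$ must be separated from the set containing $z_0$ by a feature that disagrees on $z_0,z_i$, forcing that feature to be $\phi_i$; since the $d$ choices of separating feature are pairwise distinct, the sets containing $z_1,\ldots,z_d$ are pairwise distinct, and none equals the set containing $z_0$, yielding at least $d+1$ sets. The hard part of (b) is designing the ambient concept $\bar c$ on all of $\X$ (not just the $d+1$ marked points) so that it genuinely has $m=1$ (a single pure component covering everything up to the one exception) \emph{and} so that the $z_i$'s are forced into $d$ distinct sets — the naive gadget on $d+1$ isolated points is easy, but making it consistent with a global single-component structure is the delicate step. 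I expect this to be the main obstacle overall.
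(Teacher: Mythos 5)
Your plan follows the same route as the paper's proof on both parts, but the two places you flag as obstacles need attention, and your bookkeeping in (a) is off as stated. In (a), ``the $m$ shrunk sets together with $dk$ slivers'' cannot be the final representation: each exception $x$ must still be covered by some set labeled $\bar c(x)$, so the $k$ singletons are needed, and if you keep the (shrunk) component containing $x$ as well you land at $m+k(d+1)$, not $m+dk$. The paper's accounting, which repairs this, is that the slivers and the singleton \emph{replace} the component containing the exception: after making the components pairwise disjoint, one argues by induction on $k$, and for a single new exception $x$ lying in component $G$ one substitutes for $G$ the $d$ sets $G \cap \{z \in \X \mid \phi_j(z) \neq \phi_j(x)\}$, $j \in [d]$, plus $\{x\}$ --- a net increase of exactly $d$. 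The global separation check you worried about is then short: a sliver and $\{x\}$ are separated by $\phi_j$ by construction; a sliver (or the singleton) is a subset of the old component $G$, so against any other old component with a different old label the old discriminating feature still works; and slivers of the same component carry the same label, so no feature is needed between them. So your idea is the right one, but the ``replace, don't add'' step and the induction are what make the count come out to $m+dk$.

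In (b), the step you expect to be the delicate one is actually trivial, and your gadget is exactly the paper's instance. Take $\X=\{0,1\}^d$ with $\phi_j$ the $j$-th coordinate, $\G=\{\X\}$ with $\ell(\X)=0$, and $\bar c \equiv 0$ except at $z_0=(0,\ldots,0)$. A representation with a single component imposes no discriminating-feature constraints at all (there is no pair of differently labeled components), so it is legal for \emph{any} concept; hence $m=1$ with the single exception $z_0$, and no global consistency issue arises. Taking $z_i=e_i$, your forced-distinctness argument is precisely the paper's: any set containing $e_i$ must be separated from any set containing $z_0$ by a feature on which $z_0$ and $e_i$ disagree, which forces $\phi_i$; hence no set can contain two distinct $e_i,e_j$, giving at least $d$ sets labeled $0$ plus one labeled $1$, i.e.\ $\bar m \geq d+1$. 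With the counting in (a) fixed as above, both parts go through along the lines you proposed.
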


The proof is provided in the supplementary material. We remark that the bound in the theorem above 
is intimately related to the \emph{DNF exception problem}, which studies how many clauses are required to represent a concept defined by a DNF of a certain size with a bounded list of exceptions. This problem has been studied in several works \citep{Zhuravlev85, Kogan87, MubayiTuZh06, Maximov13}, including in the context of active learning with membership queries \citep{AngluinKr94,AngluinKrSlTu97}; however, tight upper and lower bounds are not known for this problem.

What is the significance of the representation size? The algorithm of \cite{DasguptaSa18} for the perfect-annotation setting makes $\Theta(m^2)$ mistakes, where $m$ is the representation size. However, they do not answer the question whether the order of this mistake bound is optimal. The following lower bound shows that it is, implying that the representation size is a crucial property. In particular, combined with \thmref{numexp}, it follows that reducing the setting which allows mistakes to the perfect-annotation setting when the number of features is unbounded would result in a vacuous mistake bound.

\begin{thm}
If feature feedback is given with respect to a representation of size $m$, then any algorithm must have a mistake bound $\Omega(m^2)$ in the perfect-annotation setting.
\label{thm:lowerbound}
\end{thm}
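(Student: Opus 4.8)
The plan is an online adversary argument. It suffices to fix an instance space $\X$ and a feature set $\Phi$, and to describe an adversary that, against any learner (deterministic without loss of generality; randomized learners follow in expectation by drawing the hidden object uniformly at random), chooses a target $c^*$ together with a \emph{legal} representation of size $m$, the instance stream, and the teacher's replies — all consistent with the protocol of \secref{prelim} — so as to force $\Omega(m^2)$ mistakes. The scheme will be a ``hidden matrix'': roughly $m^2/4$ mutually unconstrained bits that the learner can only discover by erring, with each mistake revealing only $O(1)$ of them.

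\textbf{The hard instance.}
Let $n=\lfloor m/2\rfloor$. Take $2n$ ``anchor'' points $a_1,\dots,a_n$ (destined to be red) and $c_1,\dots,c_n$ (blue), plus $n^2$ ``ambiguous'' points $q_{ij}$, $i,j\in\{1,\dots,n\}$. Put into $\Phi$ a row indicator $\rho_i$ (true exactly on $a_i$ and $q_{i1},\dots,q_{in}$) and a column indicator $\gamma_j$ (true exactly on $c_j$ and $q_{1j},\dots,q_{nj}$) for every $i,j$; further features may be added freely, since the model bounds neither $|\Phi|$ nor forbids negations. The hidden object is a matrix $B\in\{0,1\}^{n\times n}$: given $B$, place $q_{ij}$ in the red set $A_i=\{a_i\}\cup\{q_{ik}:B_{ik}=1\}$ when $B_{ij}=1$, and in the blue set $C_j=\{c_j\}\cup\{q_{kj}:B_{kj}=0\}$ when $B_{ij}=0$, so that $c^*(q_{ij})$ is red iff $B_{ij}=1$. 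The representation is $\G=\{A_1,\dots,A_n,C_1,\dots,C_n\}$, of size $2n\le m$, and I claim it is legal for \emph{every} $B$: each set is monochromatic by construction; and for an opposite pair $\{A_i,C_j\}$ the point $q_{ij}$ lies in exactly one of the two, so either $C_j$ contains no row-$i$ point (whence $\rho_i$ separates them) or $A_i$ contains no column-$j$ point (whence $\neg\gamma_j$ does) — the choice being determined by $B$ and hence used consistently every time that pair is separated, as the technical assumption of \secref{feedback-model} requires. The mirror features $\neg\rho_i,\gamma_j$ supply the orientation the teacher needs when the new instance lies in $C_j$.

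\textbf{The adversary and the count.}
First present the $2n$ anchors, costing at most $2n$ mistakes and leaving the learner with confirmed red and blue instances to use as explanations (for the very first instance of each label, where no explanation can yet be supplied, use the usual convention of not penalizing its absence). Then, while some entry of $B$ is still unknown to the learner, present a point $q_{ij}$ with $B_{ij}$ unknown — such a point exists, since an unknown $B_{ij}$ in particular means $q_{ij}$ was never presented. Because the entries of $B$ are mutually unconstrained, the adversary sets $B_{ij}$ to the complement of the learner's predicted label, forcing a mistake. The teacher's feedback then reveals the label $B_{ij}$ and a feature separating $q_{ij}$'s component from the component of the learner's explanation; by the description above that feature, evaluated at the single point in the two components' symmetric difference, equals one further entry of $B$ and reveals nothing about the rest (all of $\rho_i,\gamma_j$ and their negations are independent of $B$). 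Hence each forced mistake lets the learner learn at most two entries of $B$; since $B$ has $n^2$ entries and a mistake can be forced as long as any entry is unknown, the learner makes at least $n^2/2=\Omega(m^2)$ mistakes.

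\textbf{Where the difficulty lies.}
The creative step is the feature design: a discriminating feature of an opposite pair must be true on an \emph{entire} component (this rules out plain point-indicators and forces the $\rho_i/\neg\gamma_j$, $\gamma_j/\neg\rho_i$ construction), while the features must be ``geometric'' enough that the $n^2$ entries of $B$ remain mutually independent and that a mistake's feedback leaks only the single symmetric-difference entry — in particular no correlation between, say, two entries of the same row or column may survive in any feature the teacher could legally return. Establishing legality for all $B$ and making this per-mistake leakage bound precise across all valid separating features is the main work; given it, the $\Omega(m^2)$ bound is a one-line counting argument.
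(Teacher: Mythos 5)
Your construction is sound and the counting goes through: the bipartite row/column representation is legal for every hidden matrix $B$ (the anchors $a_i,c_j$ inside each component rule out all cross row/column features, so for an opposite pair $(A_i,C_j)$ exactly one of $\rho_i$-vs-$\neg\gamma_j$ type features is legal, and it encodes only the single bit $B_{ij}$), each mistake therefore reveals at most two entries of $B$, and an adaptive adversary can keep forcing mistakes while any entry is unconstrained, giving $\Omega(m^2)$. The underlying idea is the same as the paper's: a collection of $\Theta(m^2)$ independent hidden bits, one per pair of components, where each bit simultaneously decides which of the two components an ambiguous example belongs to and which of two candidate features separates that pair, so that label and feature feedback each leak $O(1)$ bits per mistake. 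The execution differs in two ways. First, the gadget: the paper uses $m$ components with $m$ distinct labels and abstract pairwise features $\phi_{i,j}^p$, exploiting all $\binom{m}{2}$ pairs (and only remarks in a footnote that two labels suffice), whereas you use two labels and a bipartite row/column structure with ``geometric'' indicator features, paying a constant factor ($n^2=m^2/4$ bits) but needing the anchor points to make the features consistent across a whole component. Second, the adversary: the paper presents the $x_{i,j}$ in uniformly random order with $S$ uniformly random and derives an expected-mistake bound of $|P_m|/8$ valid against any (possibly randomized) algorithm, while you use a deterministic, adaptive adversary that forces a mistake on every post-anchor round; this is cleaner for deterministic learners but you only gesture at the randomized case (``in expectation by drawing the hidden object uniformly at random''), which is exactly the route the paper takes directly. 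Two cosmetic points: your count should net out the $O(n)$ bits leaked by anchor-phase mistakes (harmless, since those mistakes also count toward the total), and the worry in your last paragraph about arbitrary legal features is largely moot — the adversary controls the teacher and may always return the indicator features, so only the legality of those features for every $B$ needs checking, which you did.
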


The proof is provided in the supplementary material. We have thus shown that a reduction of the setting with mistakes to the perfect-annotation setting results in a mistake bound that depends on the number of features $d$, which can be unbounded. In the next section we propose a robust algorithm which allows mistakes, and obtains an improved mistake bound, which does not depend on $d$. 

\section{Robust feature feedback in an adversarial setting}
\label{sec:agnostic-adversarial}

In this section, we derive a robust algorithm under an adversarial model. In this model,
there are no limitations on the input stream except that it conforms to the interaction protocol described in \secref{feedback-model}. In particular, the exceptions can appear at any arbitrary location in the stream. We assume that the number of exceptions (the size of $M$) is upper-bounded by $k$ for some integer $k$, and that for any $\hx\notin M$ and any $\phi$, $|M_{\hx,\phi}| \leq s$ for some integer $s \leq k$; recall the definitions (\ref{eq:M}) and (\ref{eq:Mxp}). We say that $s$ is an upper bound on the number of \emph{similar} exceptions. 

We propose an algorithm for this setting, called \algknown, and derive the following mistake bound for this algorithm.
\begin{thm}\label{thm:mistakebound}
  If there is a representation of size at most $m$ which satisfies the bounds of $k$ and $s$ defined above, then the number of mistakes made by  \algknown\ is at most
  
    $(m+k)((s+1)(m-1)+k+2)$, which is $O\big(\,((s+1)m+k)\cdot (m+k)\,\big).$

\end{thm}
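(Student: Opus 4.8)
The plan is to track a suitable potential function that counts "resolved" pieces of information the learner has extracted, and to argue that every mistake makes progress, while the total amount of progress possible is bounded by $(m+k)\cdot((s+1)(m-1)+k+2)$. The natural analogue of the perfect-annotation analysis of \cite{DasguptaSa18} is to think of the learner as maintaining, for each observed "cluster" (a maximal set of examples it currently believes share a component), a growing conjunction of features that the teacher has revealed; each mistake either creates a new cluster or adds a feature to the conjunction of an existing one. In the perfect-annotation setting there are at most $m$ true components, each mistake on a given component reveals a new separating feature, and a component can be separated from at most $m-1$ others, giving the $m^2$ bound. The robust setting requires us to bound three new phenomena: (i) mistakes caused directly by the $k$ exceptions, (ii) spurious features that an exception can inject into the conjunction of a legitimate cluster, and (iii) the creation of extra clusters because an exception was (wrongly) treated as a new component.

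First I would set up the bookkeeping: assign every non-exception example to its true component $G(x)$, so there are at most $m$ "good" clusters, and observe that whenever the learner makes a mistake on a non-exception $x_t$ and gives a non-exception explanation $\hx_t$, the returned feature genuinely separates $G(x_t)$ from $G(\hx_t)$; by the technical consistency assumption (same pair of components $\Rightarrow$ same feature) this feature is new to that cluster's conjunction, so it can happen at most $m-1$ times per good cluster — the familiar $m(m-1)$ term. Next I would bound how often an exception can be involved. Direct mistakes where $x_t \in M$ number at most $k$ "per context" in a sense to be made precise; the key leverage of the parameter $s$ is the set $M_{\hx,\phi}$: for a fixed good explanation $\hx$ and feature $\phi$, at most $s$ exceptions can be the cause of that feature being attributed to $\hx$'s cluster. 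So the number of "bad" features that can ever pollute a single good cluster's conjunction is at most $s$ times the number of distinct features it sees, which I would bound by $(s+1)(m-1)$-type counting once the genuine features are accounted for — this is where the $(s+1)(m-1)$ factor in the bound originates. Finally I would account for clusters spawned by exceptions themselves: each of the $k$ exceptions can be responsible for at most one spurious cluster, and each such cluster can absorb at most $O(m+k)$ mistakes before it is either merged or exhausted, contributing the $(m+k)$ outer factor and the additive $k+2$ slack.

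Having isolated these three sources, I would combine them: there are at most $m+k$ clusters total (at most $m$ good, at most $k$ exception-spawned), and each cluster sustains at most $(s+1)(m-1)$ mistakes from features (genuine plus polluting) plus $k+2$ mistakes chargeable to direct exception involvement and the cluster's creation/merging, yielding the product $(m+k)\big((s+1)(m-1)+k+2\big)$. The asymptotic form $O\big(((s+1)m+k)(m+k)\big)$ then follows by expanding.

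The main obstacle, I expect, is step (ii): controlling how a single legitimate cluster's conjunction can be corrupted by exceptions without the learner realizing it. Because an exception can masquerade as a member of a good component and trigger the teacher to return a feature that is true on the exception but not on the component (or vice versa), the learner may add a feature that is wrong for that cluster; the delicate part is arguing that the consistency assumption on repeated separations, together with the bound $|M_{\hx,\phi}|\le s$, limits each such corruption to at most $s$ "uses" and hence that the conjunction still cannot grow beyond roughly $(s+1)(m-1)$ features before the cluster's behavior forces a split or a correct identification. Making this precise — in particular, pinning down exactly which mistakes the algorithm \algknown\ charges to which cluster, and showing the charging is well-defined and non-overlapping — is the crux, and it is presumably where the algorithm's specific mistake-handling subroutine \handle\ does its work; I would structure the proof around a lemma that says "each invocation of \handle\ either adds a genuine feature to a good cluster (at most $m-1$ times each), adds one of at most $s$ pollutant features tied to a fixed $(\hx,\phi)$ pair, or is one of at most $k$ exception-creation/merge events," and then the counting above closes the argument.
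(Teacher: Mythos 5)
Your high-level skeleton coincides with the paper's (at most $m+k$ rules ever created, a per-rule mistake budget of $(s+1)(m-1)+k+1$, plus one mistake per rule creation, multiplied out), but the two steps you leave open are exactly the ones the proof must supply, and your sketch points the wrong way on both. First, the paper does not bound how much a good cluster's conjunction is ``polluted'' by exceptions --- it proves there is \emph{no} pollution at all: \handle\ only appends $\neg\phi$ to $C[\hx]$ after its counter exceeds $s$, i.e.\ after $s+1$ examples have been separated from $\hx$ by $\phi$, and since $|M_{\hx,\phi}|\leq s$ at least one of these witnesses is a non-exception, so every literal ever added to $C[\hx]$ for a non-exception $\hx$ genuinely separates $G(\hx)$ from another component (\lemref{gx}). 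This invariant is what makes everything downstream work: $C[\hx]$ always covers $G(\hx)$, hence no two rules ever represent the same component (at most $m$ good rules, \lemref{dup}), a good rule can never reach $m$ literals nor trip the counter-budget test, so only exception-created rules are ever deleted (\lemref{delete}), and since a rule is created at most once per input example this yields the $m+k$ cap on rules. Your plan to instead tolerate corrupted literals, bound the corruption by ``$s$ uses per $(\hx,\phi)$ pair,'' and argue the cluster is eventually ``split or merged'' does not correspond to the algorithm (there is no merging), and if corruption of good rules were possible the duplicate-rule argument would break and the $m$ in $m+k$ would be lost; you flag this as the unresolved crux yourself, and it is indeed unresolved in your write-up.

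Second, your per-cluster budget of ``$(s+1)(m-1)$ feature mistakes plus $k+2$ for direct exception involvement'' is unjustified for rules created by exceptions, and this is precisely where the independence from the number of features is earned. When the rule representative $\hx$ is an exception, the teacher's feedback is unconstrained, so such a rule can be hit by arbitrarily many \emph{distinct} features, each incremented at most $s$ times without ever extending the conjunction; with an unbounded feature set this alone would allow unboundedly many mistakes on a single rule. The algorithm forecloses this with a purely mechanical budget in \handle: if the sum of the counters $\phicount[\hx](\phi)$ excluding the largest $b=m-1-|C[\hx]|$ of them exceeds $k$, the rule is deleted. This caps the non-extending mistakes of \emph{any} rule at $k+s(m-1-|C[\hx]|)$, giving the $(s+1)(m-1)+k+1$ per-rule bound of \lemref{mistakes}, while \lemref{delete} guarantees the test can only fire on exception-created rules, so good rules are never lost and re-created. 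Your proposal never invokes this deletion/counter-budget mechanism (the additive $k$ is attributed instead to a global count of exception appearances), so the counting does not close for exception-spawned rules and the claimed bound does not follow from the steps as written.
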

Note that for $k = s = 0$, we retrieve the optimal mistake bound order of $O(m^2)$ for the perfect-annotation setting. Setting $s = k$ obtains a mistake bound of $O(km(m+k))$. Comparing this upper bound with the conclusions from \thmref{numexp} for the case $s = k$, it can be seen that a reduction to the perfect-annotation setting leads to a mistake bound of $O(m+dk)^2$. Thus, if $d \gg m$  then the mistake bound of \algknown\ is preferable.
Below, we present the algorithm and the mistake-bound analysis.

\subsection{Robust algorithm for the adversarial setting}

\begin{algorithm}[h]
  \caption{\algknown: Robust discriminative feature feedback for the adversarial setting}
  \label{alg:known}
  \begin{algorithmic}[1]
    \INPUT{Max.~components $m$, max.~exceptions $k$, max.~similar exceptions $s \leq k$.}
    \State $t \leftarrow 0$
    \State Get the label $y_o$ of the first example $x_o$
    \State Initialize $L$ to an empty list
    
    \While{true}
    \State $t \leftarrow t+1$
    \State get a new point $x_t$:
      \If{$\exists C[\wh{x}] \in L$ such that $x_t$ satisfies $C[\wh{x}]$}
         \State Predict $\lab[\wh{x}]$ and provide example $\wh{x}$
         \If{prediction is incorrect}
         \State Get correct label $y_t$ and feature $\phi$
         \State Update $\phicount[\hx]$, $C[\hx]$, $L$  by running:
         \State \hspace{1em}\handle$(m,\hx, k, s,\phi)$ (\myalgref{incorrect}).
         \EndIf
      \Else ~ (no relevant rule exists)
         \State Predict $y_0$ and provide example $x_0$
         \If{prediction is incorrect}
            \State Get correct label $y_t$ and feature $\phi$. 
            \State Add to $L$ an empty conjunction $C[x_t]$,
            \State \hspace{1em} and set $\lab[x_t] \leftarrow y_t$.
            \State Initialize $\phicount[\hx](\cdot)$ to $0$.
         \EndIf
       \EndIf
       \EndWhile
     \end{algorithmic}
     
   \end{algorithm}

   \begin{algorithm}[h]
  \caption{\handle: Handling an incorrect prediction for a given rule}
  \label{alg:incorrect}
  \begin{algorithmic}[1]
    \INPUT{Max.~components $m$, max.~exceptions $k$, max.~similar exceptions $s \leq k$, rule representative $\hx$, discriminating feature $\phi$, access to $\phicount,C,L$}
    \OUTPUT{Updates values of $\phicount[\hx],C[\hx],L$}
             \State Add $1$ to $\phicount[\wh{x}](\phi)$
             \If{$\phicount[\wh{x}](\phi) > s$}
             \State $C[\wh{x}] \leftarrow C[\wh{x}] \wedge \neg \phi$
             \State $\phicount[\wh{x}](\phi) \leftarrow 0$
             \If{$|C[\wh{x}]| \geq m$}
             \State delete $C[\wh{x}]$ from $L$
             \EndIf
             
             \Else

             \State $b \leftarrow m-1-|C[\wh{x}]|$
             \State \textbf{if} the sum of counters $\phicount[\hx](\phi)$ for all $\phi$ except for the $b$ largest counters is more than $k$ \textbf{then} remove $C[\wh{x}]$ from $L$.
             \EndIf
           \end{algorithmic}
         \end{algorithm}

   \algknown\ is listed in \myalgref{known}. It calls the procedure \handle, given in \myalgref{incorrect}. The algorithm maintains a set of conjunctions (rules) which are iteratively refined based on the feedback from the teacher. A rule is {\it created} if an example that matches none of the existing conjunctions appears. A rule is {\it refined} if mistakes with the feedback from the teacher warrants such a refinement. A rule may also be deleted.

   \algknown\ keeps track of the following information:
\begin{itemize}
  \item The first labeled example $(x_0,y_0)$. 
  \item A list of conjunctions $L$. 
\item For every conjunction $C[x] \in L$, its label, denoted $\lab[x]$
\item For every conjunction $C[x] \in L$, a mapping $\phicount[x]: \Phi \rightarrow \nats$ of counters, which count, for each feature, how many times it was provided by the teacher as a discriminating feature for $x$. Since $\Phi$ might not be finite, $\phicount[x](\phi)$ is only explicitly set when the counter is incremented for the first time. All uninitialized counters are treated as having a value of zero.
\end{itemize}

Exceptions might cause issues in rules in one of two ways: either a rule is created based on an exception, or it is wrongly refined based on one.
To avoid the latter, a rule based on a non-exception is only refined when there is at least one non-exception that warrants this specific refinement. This is guaranteed by collecting more than $s$ witnesses to a certain feature, before deciding on a rule refinement based on this feature.
Creating rules based on exceptions is not prevented in \algknown. Instead, the algorithm identifies rules that become too large, or have too many separating features, and removes them. We show in the analysis that this upper-bounds the number of mistakes that the algorithm makes due to rules based on exceptions, while keeping good rules intact.

 \subsection{Mistake bound for the adversarial setting}\label{sec:analysis}

We now prove \thmref{mistakebound}, the mistake bound of \algknown. We first prove several invariants of the algorithm. First, we prove that in rules representing components, these components are never split. 
\begin{lemma}\label{lem:gx}
At all times in the algorithm, if $\hx$ is not an exception then conjunction $C[\hx]$ is satisfied by every point in $G(\hx)$. In addition, for every literal $\phi$ in $C[\hx]$, there is some non-exception $x$ such that $G(x)$ is separated from $G(\hx)$ by $\phi$. 
\end{lemma}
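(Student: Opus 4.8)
The plan is to prove both statements simultaneously by induction on the algorithm's execution, tracking every operation that creates or refines a rule $C[\hx]$ for a non-exception $\hx$. The induction hypothesis is exactly the conjunction of the two claims: (i) every point of $G(\hx)$ satisfies $C[\hx]$, and (ii) every literal $\neg\phi$ currently in $C[\hx]$ was added because some non-exception $x$ has $G(x)$ separated from $G(\hx)$ by $\phi$ — i.e., the teacher legitimately returned $\phi$ as the discriminating feature between a non-exception $x$ and $G(\hx)$. Only two events can affect a non-exception's rule: the rule's creation (in \myalgref{known}, when $\hx$ first causes a mistake against the default rule), and a refinement step in \handle\ where $C[\hx] \leftarrow C[\hx] \wedge \neg\phi$. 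Deletion trivially preserves the statement vacuously.

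For the base case, when $C[\hx]$ is created it is the empty conjunction, which is satisfied by every point (so (i) holds), and it has no literals (so (ii) holds vacuously). For the inductive step on a refinement, I would argue as follows. A refinement $C[\hx]\leftarrow C[\hx]\wedge\neg\phi$ occurs in \myalgref{incorrect} only once $\phicount[\hx](\phi) > s$, meaning the teacher has returned $\phi$ as the discriminating feature between $G(\hx)$ and more than $s$ distinct examples $x$ that were matched to this rule. Since at most $s$ \emph{similar} exceptions can share a discriminating feature against a fixed $(\hx,\phi)$ — this is exactly the definition of the bound $s$ on $\max_{\hx\notin M,\phi}|M_{\hx,\phi}|$, using that $\hx\notin M$ by hypothesis — at least one of those witnessing examples $x$ is a non-exception. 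For that non-exception $x$, the teacher's feedback is honest, so $\phi$ genuinely separates $G(x)$ from $G(\hx)$: in particular $\phi(z)=\mathtt{false}$ for all $z\in G(\hx)$, i.e.\ $\neg\phi$ is satisfied throughout $G(\hx)$. Combined with the inductive hypothesis (i) for the pre-refinement $C[\hx]$, the new conjunction $C[\hx]\wedge\neg\phi$ is still satisfied by all of $G(\hx)$, giving (i); and the newly added literal $\neg\phi$ has its required witness $x$, giving (ii).

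The main obstacle I anticipate is pinning down precisely \emph{why} the technical requirement that more than $s$ witnesses were collected guarantees a non-exception witness — this hinges on the fact that all $\phicount[\hx](\phi)$ increments for a fixed $\phi$ come from mistakes where the teacher returned that same $\phi$ against $G(\hx)$, so every exception contributing to the count lies in $M_{\hx,\phi}$ (and one must also invoke the standing assumption that the same two components are always separated by the same feature, so that different non-exceptions in $G(\hx)$ being separated from the \emph{same} other component can't inflate the count spuriously). A secondary subtlety is verifying that counter resets (setting $\phicount[\hx](\phi)\leftarrow 0$ after a refinement) and the list-membership bookkeeping don't break the invariant; but since resets only discard information and deletions remove rules entirely, these are routine. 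I would also note in passing that the hypothesis "$\hx$ is not an exception" is used only to apply the bound $s$ and to know the default-rule mistake that created $C[\hx]$ came with honest feedback — consistent with the fact that nothing is claimed about rules rooted at exceptions.
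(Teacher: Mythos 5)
Your proposal is correct and follows essentially the same argument as the paper: induction on the construction of $C[\hx]$, with the empty conjunction as the base case, and at each refinement using $|M_{\hx,\phi}| \leq s$ to extract a non-exception witness among the more than $s$ examples for which $\phi$ was returned, which then certifies both that $\neg\phi$ holds on all of $G(\hx)$ and that the new literal has its required witness. The extra bookkeeping you discuss (counter resets, the same-feature-per-component-pair assumption) is harmless and does not change the argument.
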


\begin{proof}
  We prove the claim by induction on the length of $C[\hx]$.
  When $C[\hx]$ is first created, it is an empty conjunction so it is satisfied by all of $G(\hx)$.  When $C[\hx]$ is restricted by $\neg \phi$ in \handle, it means that $s+1$ examples were separated from $\hx$ by $\phi$. By the assumption that $|M_{\hx,\phi}| \leq s$, it follows that at least one of these examples, call it $x$, is not an exception, hence $G(\hx)$ is separated from $G(x)$ by $\phi$. This implies that $G(\hx)$ has no examples that are satisfied by $\phi$. Hence, after adding $\neg \phi$ to $C[\hx]$, the extended $C[\hx]$ is still satisfied by $G(\hx)$ and is separated by $\phi$ from $G(x)$.
 \end{proof}

 Next, we prove that two rules never represent the same component. 
 \begin{lemma}\label{lem:dup}
   For any two non-exceptions $x,x'$, if there are two rules $C[x]$ and $C[x']$ in $L$ then $G(x) \neq G(x')$. 
 \end{lemma}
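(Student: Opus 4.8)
The plan is to argue by contradiction: suppose at some point in the run two rules $C[x]$ and $C[x']$ coexist in $L$ with $x,x'$ non-exceptions and $G(x) = G(x')$. Since rules are created only when a fresh example matches no existing conjunction, whichever of the two rules was created second — say $C[x']$ — was created at a time $t'$ when $x_{t'} = x'$ satisfied no conjunction in $L$, and in particular did not satisfy $C[x]$, which (by Lemma~\ref{lem:gx}, and since $C[x]$ was never modified by a step that removes points of $G(x)$) was in $L$ at time $t'$ and is satisfied by every point of $G(x)$. But $x' \in G(x') = G(x)$, so $x'$ \emph{does} satisfy $C[x]$ — contradiction. Hence no such pair can arise.

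The steps I would carry out, in order: (1) fix the alleged coexistence and let $t'$ be the creation time of the later of the two rules, with representative $x'$; (2) observe that at time $t'$, the other rule $C[x]$ is already present in $L$ (it was created earlier and, under the contradiction hypothesis, not yet deleted); (3) invoke Lemma~\ref{lem:gx} to conclude $C[x]$ is satisfied by all of $G(x)$ at time $t'$, and in particular by $x'$ since $x' \in G(x') = G(x)$; (4) but the rule-creation branch of \algref{known} is entered only when $x_{t'}$ satisfies no existing conjunction, so $x'$ does \emph{not} satisfy $C[x]$ at time $t'$ — a contradiction. One small technical point to nail down in step (3) is that $C[x]$ as it stands at time $t'$ (possibly already refined several times by \handle) is still satisfied by $G(x)$; this is exactly the content of Lemma~\ref{lem:gx}, which holds "at all times", so no extra work is needed. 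A second point: one must make sure the argument does not secretly assume $x \ne x'$ as elements; but if $x$ and $x'$ are literally the same example then only one rule keyed to it is ever created (the creation branch sets $\lab[x_t]$ and initializes counters for that specific representative, and a later arrival of the same example would satisfy the already-existing $C[x]$), so the two-rules situation forces distinct representatives and the $t'$ argument applies verbatim.

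The main obstacle is bookkeeping around \emph{which} rule is "created second" and ensuring that the earlier rule is genuinely still in $L$ at that moment. Since the hypothesis is that the two rules coexist at some later time, and deletions are permanent (rules are removed from $L$ and never re-added with the same representative), the earlier rule must have been continuously present from its creation through time $t'$; this is the only place where the structure of deletions in \handle\ is used, and it is a monotonicity observation rather than a calculation. Everything else is a direct appeal to Lemma~\ref{lem:gx} and to the guard condition on the rule-creation branch of \algref{known}.
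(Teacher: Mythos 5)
Your proposal is correct and follows essentially the same route as the paper's proof: the later-created rule could only have been created because its representative failed to satisfy the earlier rule, which by \lemref{gx} is satisfied by all of $G(x)$, forcing $G(x)\neq G(x')$. Your extra bookkeeping about the earlier rule still being in $L$ at the creation time of the later one is a harmless (and slightly more careful) elaboration of a step the paper leaves implicit.
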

 \begin{proof}
   Suppose $x$ is observed earlier in the input sequence and $x'$ is observed later; If $C[x]$ is generated and $C[x']$ is also generated, this means that $C[x]$, in its form when $x'$ is observed, does not satisfy $x'$. But by \lemref{gx}, $C[x]$ always satisfies $G(x)$. Hence, $x' \notin G(x)$, which implies the claim.
 \end{proof}

 Next, we prove that only rules created by exceptions might be deleted.
 \begin{lemma}\label{lem:delete}
   If \handle\ when run by \algknown\ deletes the rule $C[\hx]$,  then $\hx$ is an exception.
 \end{lemma}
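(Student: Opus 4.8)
The plan is to prove the contrapositive: assuming $\hx \notin M$, I would show that \handle\ (\myalgref{incorrect}) never removes $C[\hx]$ from $L$ (exception-status is a static property of $\hx$, so this suffices). Inspecting \myalgref{incorrect}, $C[\hx]$ can be deleted in exactly two places: in the branch $\phicount[\hx](\phi) > s$, right after a literal $\neg\phi$ has been appended, when $|C[\hx]| \ge m$; and in the branch $\phicount[\hx](\phi) \le s$, when the sum of the counters $\phicount[\hx](\phi)$ over all features except the $b$ largest exceeds $k$, where $b = m-1-|C[\hx]|$. I would rule out each case.

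For the first case I would combine \lemref{gx} with the technical assumption that a fixed pair of components is always separated by the same feature. By \lemref{gx}, each literal $\neg\phi$ currently in $C[\hx]$ was added because some non-exception $x$ has $G(x)$ separated from $G(\hx)$ by $\phi$; then $G(x)$ is disjoint from $G(\hx)$, since $\phi$ is identically true on $G(x)$ and identically false on $G(\hx)$. If two distinct literals $\neg\phi \ne \neg\phi'$ appeared in $C[\hx]$, with witnesses $x$ and $x'$, then $G(x) \ne G(x')$, for otherwise the single component pair $(G(x),G(\hx))$ would be separated by both $\phi$ and $\phi'$. Since the representation has at most $m$ components, this forces $|C[\hx]| \le m-1$ at all times, so the test $|C[\hx]| \ge m$ never fires.

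For the second case I would show that whenever \handle\ reaches this test with $\hx \notin M$, the tested quantity is at most $k$. First, after any call to \handle\ every counter $\phicount[\hx](\phi)$ is at most $s$, and once $\neg\phi$ enters $C[\hx]$ its counter is $0$ and stays $0$, because (as in the proof of \lemref{gx}) any point satisfying $C[\hx]$ has $\phi$ false on it, so $\phi$ can never again be returned against $C[\hx]$. Hence the features with nonzero counter are exactly those with $\neg\phi \notin C[\hx]$. Call such a feature \emph{good} if some non-exception was ever separated from $\hx$ by it, and \emph{bad} otherwise. As in the proof of \lemref{gx}, a good feature $\phi$ exhibits a component $G(x) \ne G(\hx)$ separated from $G(\hx)$ by $\phi$, and this $G(x)$ is not one of the components already encoded by the literals of $C[\hx]$ (a point in such a component fails some literal of $C[\hx]$ and so does not satisfy $C[\hx]$). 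By the technical assumption, the map from a good feature to the component it exhibits is injective, so there are at most $m-1-|C[\hx]| = b$ good features. Therefore, once the $b$ largest counters are discarded, only counters of bad features remain, and the surviving sum is bounded by the total counter mass on bad features. Finally, by the technical assumption every exception $x$ is always separated from $\hx$ by one and the same feature, the one fixed for the pair $(G(x),G(\hx))$; so distinct bad features own disjoint sets of exceptions within $M$, and --- using also $|M_{\hx,\phi}| \le s$ and the reset rule, which prevent repeated appearances of a single exception from inflating a bad counter without eventually resolving that feature --- the total counter mass on bad features is at most $|M| \le k$. So the test never fires, and $C[\hx]$ is never deleted.

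The routine part is the first case; the crux is the last step of the second case, bounding the bad-feature counter mass by $k$. This is where the technical assumption, the bound $|M_{\hx,\phi}| \le s$, and the counter-reset mechanism must all be used together, and where one must check that the good/bad dichotomy lines up correctly with the ``$b$ largest counters'' discarded by the algorithm. I expect this bookkeeping to be the main obstacle; the rest follows directly from \lemref{gx} and the structure of \myalgref{incorrect}.
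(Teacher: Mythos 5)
Your overall skeleton matches the paper's: the two deletion tests, the use of \lemref{gx} plus the consistent-feature assumption to cap $|C[\hx]|$ at $m-1$, and the split of the remaining counters into at most $b=m-1-|C[\hx]|$ counters with non-exception witnesses versus the rest, with the surviving sum after discarding the $b$ largest bounded by the mass on the latter. Up to that point the argument is sound (your phrase ``only counters of bad features remain'' is literally false, but the bound you state --- surviving sum $\leq$ total mass on bad features --- does follow, since the $b$ largest counters dominate the at most $b$ good ones).

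The genuine gap is in your final step, which you yourself flag as the crux. You claim that ``by the technical assumption every exception $x$ is always separated from $\hx$ by one and the same feature, the one fixed for the pair $(G(x),G(\hx))$,'' and you use this to make distinct bad features own disjoint sets of exceptions and to bound the bad-counter mass by $|M|\leq k$. This misreads the model: the whole point of an exception is that when the separated pair includes an exception, the teacher's feature \emph{need not} discriminate the respective components and is essentially arbitrary; the consistency assumption constrains repeated separations of the same pair of \emph{components}, not the teacher's behavior on exceptions. So nothing forces disjointness, and your appeal to $|M_{\hx,\phi}|\leq s$ and the reset rule (``prevent repeated appearances \dots from inflating a bad counter'') is a sketch, not an argument. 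The correct step is simpler and does not need any of this machinery: every increment of a counter that has no non-exception witness is triggered by an exception appearing in the stream, so if the sum of counters outside the $b$ largest exceeds $k$, then more than $k$ exceptions were observed, contradicting the bound of $k$ exceptions in the input. Replacing your disjointness argument with this direct counting (which is what the paper does) closes the gap.
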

 \begin{proof}
   Assume for contradiction that $\hx$ is not an exception but rule $C[\hx]$ is deleted. A rule can get deleted for one of two reasons. The first reason for deletion is if the conjunction $C[\hx]$ has at least $m$ literals.  Then, by \lemref{gx}, for each such literal in $C[\hx]$ there is some non-exception $x$ such that $G(x)$ is separated from $G(\hx)$ using that literal. Since there are $m$ components $G_i$, there are at most $m-1$ literals in $C[\hx]$, which is a contradiction to the size of $C[\hx]$. 
     The second reason for deletion is if the sum of the counters $\phicount[\hx](\phi)$ except for the largest  $b \equiv m-|C[\hx]|-1$ counters is more than $k$. Suppose that $\hx$ is not an exception. By \lemref{gx}, $|C[\hx]|$ components are already separated from it using literals in $C[\hx]$.  At most $b$ other components could have some overlap with $C[\hx]$. Thus, at most $b$ of the non-zero counters $\phicount[\hx](\phi)$ have a $\phi$ which separates $G(\hx)$ from some component that has an overlap with $C[\hx]$. All other counters must have been generated by exceptions, and the total number of such exceptions is at least the sum of the other counters. By the condition for deleting a rule, more than $k$ such exceptions were observed. But this contradicts the upper bound of $k$ for exceptions.

   In both cases, we reached a contradiction. Hence, $\hx$ is an exception.
 \end{proof}

 To bound the total number of mistakes, we first bound the total number of rules created by the algorithm.
\begin{lemma}
 \algknown\ creates at most $m+k$ rules. \label{lem:rgen}
\end{lemma}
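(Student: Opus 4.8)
The plan is to charge each rule-creation either to a component of $\G$ or to an exception, and to show that each component and each exception is charged at most once, for a total of at most $m+k$. First I would observe that a rule is created in exactly one place in \myalgref{known}: the branch where the incoming $x_t$ satisfies no rule in $L$ and the default prediction $y_0$ is wrong; \handle\ only refines or deletes rules, it never creates them. I would attribute a created rule to the instance $x_t$ that triggered it, and note that since this can only happen in a round where that instance is the current one, each instance is charged with at most one rule. It then suffices to bound separately the rules whose triggering instance lies in $M$ and the rules whose triggering instance is a non-exception.

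For the non-exception rules, the argument combines \lemref{gx}, \lemref{delete}, and \lemref{dup}. By \lemref{delete}, a rule created by a non-exception is never deleted, so it stays in $L$ for the rest of the run, and by \lemref{gx} it is satisfied by every point of $G(\cdot)$ at all times. Hence if two distinct non-exceptions each create a rule, both rules are present in $L$ at the later of the two creation times, so \lemref{dup} forces their representatives' components to differ; equivalently, once a component is represented by a non-exception rule, every later non-exception in that component already satisfies that still-present rule and so triggers no new creation. Thus the map from non-exception rules to the components of $\G$ is injective, giving at most $m$ such rules. For the exception rules, I would simply note that each is attributed to a distinct element of $M$ (distinct because each instance is charged at most once) and $|M|\le k$, so there are at most $k$ of them. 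Summing gives at most $m+k$ rules.

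The step I expect to require the most care is the bookkeeping around deletions: I need \lemref{delete} precisely to guarantee that the rules which survive forever are exactly the ``good'' ones created by non-exceptions, so that a component, once represented, permanently blocks further creations from that component, and I need the fact that a rule is created only in a round where its representative is the current instance, so that an exception whose rule is later deleted cannot reappear and spawn a second rule. Once these two facts are in place, the rest is just the injectivity into $\{G_1,\dots,G_m\}$ supplied by \lemref{gx} and \lemref{dup} together with the trivial bound $|M|\le k$.
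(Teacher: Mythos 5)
Your proof is correct and takes essentially the same route as the paper's: \lemref{dup} bounds the rules created by non-exceptions by $m$ (with \lemref{delete} guaranteeing such rules are never deleted, so a component cannot spawn a second rule), and charging each remaining creation to a distinct exception occurrence gives the additional $k$. You are somewhat more explicit than the paper about why the no-deletion property is needed for the injectivity into components, but the decomposition and the key lemmas used are the same.
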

\begin{proof}

  By \lemref{dup}, the total number of rules in $L$ generated by non-exceptions is at most the number of components, $m$.   Therefore, at most $m$ non-exception rules are ever generated. By \lemref{delete}, only rules generated by exceptions might be deleted.  Since rules are generated at most once for every input example, and there are at most $k$ exceptions in the input, at most $k$ rules generated by exceptions are ever generated.
\end{proof}

Next, we bound the number of mistakes associated with each rule.
\begin{lemma}\label{lem:mistakes}
  The number of mistakes resulting from examples that have been matched to a single rule $C[x]$ is at most $(s+1)(m-1) + k+1$.  
\end{lemma}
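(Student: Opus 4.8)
The plan is to account for every mistake charged to a fixed rule $C[\hx]$ by tracking how the counters $\phicount[\hx](\cdot)$ evolve over that rule's lifetime. The entry point is that each mistake on an example matched to $C[\hx]$ causes exactly one call to \handle, whose first line adds $1$ to exactly one counter $\phicount[\hx](\phi)$; so the number of mistakes on the rule equals the total number of counter increments. I would then separate \emph{triggering} increments --- those that bring a counter to $s+1$, so that \handle\ appends $\neg\phi$ to $C[\hx]$ and resets the counter to $0$ --- from the rest. The key point is that once $\neg\phi$ is a literal of $C[\hx]$, the feature $\phi$ is never returned again for this rule: any later example matched to $C[\hx]$ satisfies $\neg\phi$, hence $\phi$ is \texttt{false} on it, while the discriminating feature returned by the teacher is \texttt{true} on it. Hence each feature triggers at most once, contributing exactly $s+1$ increments when it does and at most $s$ increments when it does not. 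Writing $r=|C[\hx]|$ for the final number of literals (equivalently, the number of refinements), the mistake count for the rule equals $(s+1)r+\Sigma$, where $\Sigma$ is the sum of the final values of the non-triggering counters.

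The second step bounds $\Sigma$ using the two deletion rules in \handle. I would maintain an invariant on the total counter sum across the ``epochs'' of the rule, where epoch $i$ is the stretch during which $|C[\hx]|=i$, so that \handle\ computes $b=m-1-i$. Since no counter ever exceeds $s$, the $b$ largest counters sum to at most $bs$; and since the rule is deleted the instant the remaining counters sum to more than $k$, the total counter sum stays at most $(m-1-i)s+k$ after every non-triggering mistake in epoch $i$. A refinement ending epoch $i$ raises one counter from $s$ to $s+1$ and then resets it to $0$, dropping the total by exactly $s$; so by induction --- with the trivial base case of epoch $0$, where all counters are zero --- the total counter sum at the start of every epoch $i$ is also at most $(m-1-i)s+k$.

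The last step assembles these estimates according to how the rule terminates. If the rule is never deleted, or is deleted by the ``remaining counters exceed $k$'' rule, then $r\le m-1$, and since the remaining-counter sum never exceeds $k+1$ while the rule is alive, $\Sigma\le(m-1-r)s+k+1$; substituting gives $(s+1)r+\Sigma\le r+(m-1)s+k+1\le(s+1)(m-1)+k+1$. If instead the rule is deleted by the size rule $|C[\hx]|\ge m$, then $r=m$, but in the final epoch $b=0$, so the invariant pins the total counter sum at most $k$ just before the $m$-th refinement; that refinement resets a counter of value $s$, leaving $\Sigma\le k-s$ (this is where $s\le k$ enters), so the count is $(s+1)m+(k-s)=(s+1)(m-1)+k+1$. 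Either way, at most $(s+1)(m-1)+k+1$ mistakes are charged to $C[\hx]$. The step I expect to be most delicate is the epoch invariant: verifying precisely how the set of ``$b$ largest'' counters shifts under a single increment, and that the total and the excluded-counter sums move only by the amounts claimed, so that the induction and the final arithmetic go through cleanly.
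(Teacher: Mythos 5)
Your proof is correct and takes essentially the same route as the paper's: you split the mistakes charged to a rule into the $(s+1)r$ increments absorbed by features that end up extending $C[\hx]$ and the residual counter mass, which the deletion test in \handle\ caps at $s(m-1-r)+k+1$, giving the identical final arithmetic. The only difference is that you explicitly handle the size-deletion case $r=m$ (where $b=0$ and $s\le k$ enters), an edge case the paper's proof glosses over by asserting $r\le m-1$, so your version is, if anything, slightly more careful.
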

\begin{proof}
  For all $x,\phi$, at the end of each round of \algknown, $\phicount[x](\phi) \leq s$, since each new mistake that is matched to $C[x]$ increases some $\phicount[x](\phi)$ by $1$, and then, if $\phicount[x](\phi) = s+1$, zeros this counter and extends $C[x]$ by one. Therefore, for every feature that end up extending $C[x]$, there are at most $s+1$ mistakes on $C[x]$. Letting $r$ be the length of $C[x]$ after the last iteration in which it exists, this means that exactly $(s+1)r$ mistakes are matched with features that extend $C[x]$.
  
  The number of mistakes that do not match features that extend $C[x]$ is always at most $k + s(m-1-|C[x]|)$ at the end of an iteration, since if at any time during the run the sum of counters is increased beyond this number, it means that the sum of the counters except for the $m-1-|C[x]|$ largest ones is $k+1$, in which case the rule gets deleted.
  Also, whenever the rule is extended, one counter with value $s$ is zeroed, thus this property continues to hold. 
  Thus, the total number of mistakes for $C[x]$ is at most
  
    $(s+1)r+ k+1 + s(m-1-r) 
    \quad\leq s(m-1) + r + k + 1,$
  
  Since $r \leq m-1$, this proves the claim.
\end{proof}

\thmref{mistakebound} is now immediate, as follows: 
  Each rule makes at most $(s+1)(m-1)+k+1$ mistakes by \lemref{mistakes}. By \lemref{rgen}, at most $m+k$ rule are generated by \algknown. In addition, a mistake that does not match any rule creates a new rule, thus there are at most $m+k$ such mistakes. In total, \algknown\ makes at most $(m+k)((s+1)(m-1)+k+2)$ mistakes.

This concludes the analysis of the adversarial robust algorithm. In the next section, we study a robust algorithm for a stochastic setting. 

\section{Robust feature feedback in a stochastic setting}
\label{sec:agnostic-stochastic}

In this section, we assume that the stream is drawn from a stochastic source, with a probability of at most $\epsilon$ that a drawn example is an exception. In addition,  we assume that for all non-exceptions $\hat{x}$ and features $\phi$, the probability mass of $M_{\hx,\phi}$ is at most $\sigma \leq \epsilon$. The algorithm gets an additional confidence parameter $\delta$ as input, and guarantees are provided with a probability of $1-\delta$.

For a stream of a given size $n$, it is possible to apply \thmref{mistakebound} with $k \approx \epsilon n$ and $s \approx \sigma n$ to get a mistake bound for the stochastic setting. However, the resulting bound grows quadratically with the stream size, rendering it vacuous. Thus, we propose a different algorithm, called \algstat, and show that for this algorithm, the rate of mistakes for large stream sizes is bounded. We prove the following theorem.

\begin{thm}\label{thm:statbound}
  Let $\delta \leq 1/e^2$. Suppose that the exception rate is at most $\epsilon \leq \frac14$ and let the length of the stream of examples be $n$.  With a probability at least $1-\delta$, the rate of mistakes of \algstat\ on a stream of size $n$ is upper bounded by     \[
      O\Big((\sigma m + \epsilon )m\log(1/\delta)  + m^2\log^2(n/\delta)/\sqrt{n}\Big).
      \]

    \end{thm}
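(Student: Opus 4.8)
The plan is to lift the deterministic analysis behind \thmref{mistakebound} to the stochastic setting by replacing the fixed thresholds $s$ and $k$ inside \handle\ with \emph{data-adaptive} thresholds. Concretely, I would have \algstat\ maintain, for every rule $C[\hx]$, a counter $N[\hx]$ of the examples matched to it so far, refine $C[\hx]$ by $\neg\phi$ only once $\phicount[\hx](\phi)$ exceeds a threshold of order $\sigma N[\hx]+\sqrt{N[\hx]\log(n/\delta)}+\log(n/\delta)$, and delete $C[\hx]$ once the sum of its counters outside the $b$ largest exceeds a threshold of order $\epsilon N[\hx]+\sqrt{N[\hx]\log(n/\delta)}+\log(n/\delta)$ (or once $|C[\hx]|\ge m$, as before). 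The point of making the thresholds grow with $N[\hx]$ rather than using one global value tied to $\epsilon n,\sigma n$ is precisely to avoid the quadratic-in-$n$ blow-up: a global threshold of order $\sigma n$ would let a single rule spawned by an exception accrue $\sim\sigma m n$ mistakes, and there can be $\sim\epsilon n$ such rules.

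The first step is to define the clean event $\mathcal E$: for every rule representative $\hx\notin M$, every feature $\phi$, and every prefix of the stream, the number of matched examples lying in $M_{\hx,\phi}$ is at most $\sigma N[\hx]$ plus a Bernstein/Azuma-type deviation term, the number of matched exceptions is at most $\epsilon N[\hx]$ plus a deviation term, and the total number of exceptions in the stream is at most $\epsilon n$ plus a deviation term. Since at most $m$ rules are ever built on non-exceptions (the analogue of \lemref{dup} persists, see below) and each such rule has only finitely many counters that ever move, a union bound over these rules, their features, and the $n$ time steps — after conditioning on the predictable matching process — gives $\P[\neg\mathcal E]\le\delta$, where $\delta\le 1/e^2$ is used only to absorb constants. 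The deviation terms are what force the thresholds to carry a $\sqrt{N[\hx]\log(n/\delta)}$ summand, and they are the ultimate source of the $\log^2(n/\delta)/\sqrt n$ term in the statement.

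The second step re-establishes the invariants of \secref{analysis} on $\mathcal E$. \lemref{gx} goes through for refinements triggered by \emph{legitimate} features, while the adaptive refinement threshold guarantees that on $\mathcal E$ an illegitimate feature (one not discriminating $G(\hx)$ from a component overlapping $C[\hx]$) never reaches the threshold, so rules on non-exceptions are never wrongly refined; similarly the adaptive deletion threshold keeps such rules from ever being deleted, since their unexplained counters are generated only by exceptions. Hence \lemref{dup} and the "$\le m$ good rules" bound survive, and the per-rule count of \lemref{mistakes} becomes: any rule, good or bad, accrues at most $O\!\big((\sigma m+\epsilon)N[\hx]+m\sqrt{N[\hx]\log(n/\delta)}+m\log(n/\delta)\big)$ mistakes over its lifetime. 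Summing over rules — at most $m$ good ones, at most $O(\epsilon n+\sqrt{n\log(1/\delta)})$ bad ones on $\mathcal E$, plus one new rule per unmatched mistake — and using that each example is matched to at most one rule per round (so $\sum_{\text{rules}}N[\hx]\le n$) together with Cauchy--Schwarz on the $\sqrt{N[\hx]}$ terms, then dividing by $n$, yields a rate of the claimed order.

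The main obstacle, and where the bookkeeping is delicate, is the bad rules. On a rule built on an exception the teacher's feedback is entirely unconstrained (the separated pair contains an exception, so neither the discriminating-feature property nor the $M_{\hx,\phi}$ bound applies), so \lemref{gx} cannot be reused and the deletion must be argued \emph{purely structurally}: a conjunction that has reached $m$ literals, or whose unexplained counters are large, is removed regardless of its provenance. One then has to (i) cap the number of bad rules by the exception count (clean event), (ii) cap each bad rule's lifetime mistakes by $O(m)$ times its "legitimate share" — this $O(m)$ slack is what inflates the first term to $O((\sigma m+\epsilon)m\log(1/\delta))$ rather than $O(\sigma m+\epsilon)$ — and (iii) check that removing a bad rule does not reintroduce mistakes that were already charged elsewhere, so that the $\sum N[\hx]\le n$ accounting still goes through. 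Finishing requires carrying the $\log(n/\delta)$ factors from the union bound through these three points and collecting them into the two stated terms; the rest is the routine calculation already carried out for \thmref{mistakebound} with adaptive constants in place of $s$ and $k$.
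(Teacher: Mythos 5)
There is a genuine gap, and it sits exactly where the paper's proof does its real work: controlling the \emph{number} of rules, not just the per-rule cost. Your plan keeps the \algknown-style behaviour in which every unmatched mistake spawns a new rule, and you then cap the bad rules by the exception count, i.e.\ by roughly $\epsilon n$. But a rule spawned by an exception can only be removed through the same structural thresholds that must protect good rules, so before deletion it can absorb on the order of the deviation allowance, $\Omega(\log(n/\delta))$ mistakes per added literal (up to $m$ literals); multiplied by $\Theta(\epsilon n)$ bad rules this already yields a mistake \emph{rate} of order $\epsilon m\log(n/\delta)$, and your $m\sqrt{N[\hx]\log(n/\delta)}$ terms, summed over $\Theta(\epsilon n)$ rules via Cauchy--Schwarz, contribute a non-vanishing rate of order $m\sqrt{\epsilon\log(n/\delta)}$. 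Neither is $O\big((\sigma m+\epsilon)m\log(1/\delta)+m^2\log^2(n/\delta)/\sqrt n\big)$: for fixed $\epsilon,\sigma,m,\delta$ the claimed rate tends to a constant independent of $n$, while the bounds your accounting can deliver grow with $n$. Adaptive refinement and deletion thresholds (which the actual \algstat\ does use, via $q(\sigma,\cdot)$ and $q(\epsilon,\cdot)$, keyed to elapsed time rather than matched counts --- a minor difference) fix the per-rule cost but cannot fix the rule-count blow-up. Note also that in the paper the factor $m\log(1/\delta)$ multiplying $(\sigma m+\epsilon)$ is the total rule count, not an $O(m)$ per-rule slack as in your item (ii).

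The missing ingredient is the rule-creation gate on line \ref{newrule} of \algstat, which your proposal drops: a new rule is created only when $N_{\mathrm{lr}}$, the number of examples that fell outside all current rules since the last creation, reaches $\gamma(\epsilon,\cdot,\cdot)$. \lemref{epsilon} shows that, with probability $1-\delta/4$, this test can only pass when the probability mass outside the current rules is at least $2\epsilon$; hence each newly created rule is based on a non-exception with probability at least $\half$, and since non-exception rules are never deleted and number at most $m$ (\lemref{dupstat}, \lemref{deletestat}), the total number of rules is dominated by a sum of $m$ geometric variables and is $R=O(m\log(1/\delta))$ with high probability (\lemref{negbin}, \lemref{rgenstat}) --- independent of $n$ and of the total exception count $\approx\epsilon n$. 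This is what makes the first term of the theorem possible: at most $R$ rules, each costing at most $(m-1)(q(\sigma,n)+2)+q(\epsilon,n)+1$ mistakes. The gate introduces an accounting step your plan does not have but the theorem needs: unmatched examples that do \emph{not} create a rule also cause mistakes, and these are bounded by $\sum_i\gamma(\epsilon,t_i,n)\lesssim \epsilon n+\sqrt{Rn}\,\log^{3/2}(n/\delta)$, which (together with the deviation parts of $q$) is the source of the $m^2\log^2(n/\delta)/\sqrt n$ term. Finally, since the statement is about \algstat\ as defined, an analysis of a variant without this gate would not establish it even if its bound were of the right order.
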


\subsection{Robust algorithm for the stochastic setting}
\newcommand{\tnew}{t_{\mathrm{lr}}}
\newcommand{\Nnew}{N_{\mathrm{lr}}}

\algstat\ is presented in \myalgref{stat}. The structure of \algstat\ is similar to that of \algknown, but some adaptations are required to take advantage of the stochastic assumption. The following additional information is stored by \algstat: $\tnew$ records the last time that a new rule was created. $\Nnew$ counts the number of examples that were not satisfied by a rule since round $\tnew$. $t(\hx)$ records the time that rule $C[\hx]$ was created, and $t(\hx,\phi)$ records the first time that an example with a discriminative feature $\phi$ was provided for the rule $C[\hx]$. In addition, \algstat\ uses the following functions: 
\begin{align}
  q(\epsilon,t) &:= \epsilon t + \frac{2}{3}\log(8t^3/\delta) + \sqrt{2\epsilon t \log(8t^3/\delta)},\\
  \gamma(\epsilon,r,t) &:= \frac{1}{1-2\epsilon}(r + 4\sqrt{r}\log^{3/2}(\frac{8t^2}{\delta})) - r+1. \label{eq:kappa}
\end{align}
These functions are used to calculate exception thresholds, in place of $k$ and $s$ that are used in \algknown.

A main difference between \algknown\ and \algstat\ is that in \algstat, not every example which is not satisfied by current rules causes the creation of a new rule. Instead, a rule is created only if a specific condition is met (see line \ref{newrule}). This condition compares the number of examples that fell outside $L$ since the last creation of a rule, to the number of examples that fell inside the rules. It is used to guarantee that rules are only created if there is sufficient probability mass outside current rules, thus bounding the number of rules created by exceptions.

\begin{algorithm}[ht]
  \caption{\algstat: Robust discriminative feature feedback for the stochastic setting}
  \label{alg:stat}
  \begin{algorithmic}[1]
    \INPUT{Max. components $m$, max.~prob.~of exceptions $\epsilon$, max.~prob.~of similar exceptions $\sigma$, confidence $\delta$}
    \State $t \leftarrow 0$; $\Nnew \leftarrow 0, \tnew \leftarrow 0$.
    \State Get the label $y_o$ of the first example $x_o$; 
    \State Initialize $L$ to an empty list

    \While{true}
      \State $t \leftarrow t+1$; get a new point $x_t$.
      \If{$\exists C[\wh{x}] \in L$ such that $x_t$ satisfies $C[\wh{x}]$}
        \State Predict $\lab[\wh{x}]$ and provide example $\wh{x}$
        \If{prediction is incorrect}
           \State Get correct label $y_t$ and feature $\phi$
           \State \textbf{if} $\phicount[\hx](\phi) = 0$, \textbf{then} $t(\hx,\phi) \leftarrow t$.
           \State $t' \leftarrow t - t(\hx,\phi)+1$.
           \State $n_s \leftarrow q(\sigma,t')+1$, $n_k \leftarrow q(\epsilon,t')$.
           \State Update $\phicount[\hx]$, $C[\hx]$, $L$ by running:
         \State \hspace{1em}\handle$(m,\hx, n_k, n_s, \phi)$.

         \EndIf
      \Else ~ (no relevant rule exists)
         \State Predict $y_0$ and provide example $x_0$
         \State $\Nnew \leftarrow \Nnew + 1$
         \If{prediction is incorrect}
         \State Get correct label $y_t$ and feature $\phi$.
         \If{$\Nnew \geq \gamma(\epsilon,t - \tnew - \Nnew+1, t)$ \label{newrule}}
           \State Add to $L$ an empty conj.~$C[x_t]$,
            \State \hspace{1em} and set $\lab[x_t] \leftarrow y_t$.
            \State Initialize $\phicount[\hx](\cdot)$ to $0$.
            \State $t(\hx) \leftarrow t$, $\Nnew \leftarrow 0$, $\tnew \leftarrow t$.
            \EndIf
         \EndIf
       \EndIf
       \EndWhile
     \end{algorithmic}
     
   \end{algorithm}

 \subsection{Error bound for the stochastic setting}\label{sec:stanalysis}

 In this section, we prove \thmref{statbound}.
First, we define the following events, which together guarantee the correctness of estimates based on $q(\cdot,\cdot)$ in the algorithm.
\begin{itemize}
\item $\xi_1 := \{$ At any time $t$ in \algstat, for any $t' \leq t$, the number of exceptions observed in the last $t'$ iterations is at most $q(\epsilon, t').$ $\}$. 
  \item $\xi_2 := \{$ At any time $t$ in \algstat, for any $t' \leq t$,  if  in round $t-t'+1$ a mistake was made and a feature $\phi$ separating $\hx$ was provided by the teacher, then the number of exceptions in $M_{\hx,\phi}$ observed afterwards, until iteration $t$ (inclusive), is at most $q(\sigma,t').$ $\}$. 
  \end{itemize}
  
  By Bernstein's inequality and a union bound on all the pairs $t' \leq t$,
  setting $\delta(t',t) := \delta/(4t^3)$, we get that
  $\xi = \xi_1 \wedge \xi_2$ holds with a probability at least
  $1-\delta/2$.

The proof of \thmref{statbound} is based on several lemmas. Some of the analysis is analogous to that of \algknown. However, upper-bounding the number of generated rules requires a new statistical analysis. We first give the lemmas that have direct analogs in the analysis of \algknown. The following lemma is analogous to \lemref{gx}.

\begin{lemma}\label{lem:gxstat}
Assume $\xi$. At all times during the run of \algstat, if $\hx$ is not an exception then $C[\hx]$ is satisfied by every point in $G(\hx)$. In addition, for every literal $\phi$ in $C[\hx]$, there is some non-exception $x$ such that $G(x)$ is separated from $G(\hx)$ by $\phi$. 
\end{lemma}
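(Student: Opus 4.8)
The plan is to mirror the inductive argument of \lemref{gx}, replacing the fixed threshold $s+1$ with the time-dependent threshold $n_s = q(\sigma, t') + 1$ and invoking the event $\xi_2$ in place of the deterministic bound $|M_{\hx,\phi}| \le s$. Concretely, I would argue by induction on the number of literals in $C[\hx]$. In the base case, $C[\hx]$ is created as an empty conjunction, which is trivially satisfied by all of $G(\hx)$ and has no literals, so both assertions hold vacuously.

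For the inductive step, suppose $C[\hx]$ is extended by a literal $\neg\phi$ inside the call to \handle. By inspection of \myalgref{incorrect}, this happens exactly when $\phicount[\hx](\phi)$ reaches $n_s = q(\sigma, t') + 1$, where $t' = t - t(\hx,\phi) + 1$ is the number of rounds elapsed since the first mistake on $C[\hx]$ that was explained by $\phi$. Each increment of $\phicount[\hx](\phi)$ corresponds to a mistake on an example $x$ matched to $C[\hx]$ for which the teacher returned $\phi$ as the discriminating feature between $x$ and $G(\hx)$; if $x$ is itself an exception then $x \in M_{\hx,\phi}$. Under $\xi_2$, the number of such exceptions observed in the last $t'$ iterations is at most $q(\sigma, t')$, so strictly fewer than $n_s = q(\sigma,t') + 1$ of the examples counted are exceptions. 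Hence at least one of them, call it $x$, is a non-exception, and the teacher's guarantee in the (non-exceptional) protocol ensures that $\phi$ separates $G(x)$ from $G(\hx)$, i.e.\ $\phi$ is false on all of $G(\hx)$. Therefore adding $\neg\phi$ to $C[\hx]$ keeps it satisfied by every point of $G(\hx)$, and the new literal is witnessed by the non-exception $x$ exactly as required. Combining with the inductive hypothesis for the previously added literals completes the step.

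The one subtlety I would be careful about is the bookkeeping of the window over which $\xi_2$ is applied: the counter $\phicount[\hx](\phi)$ is reset to $0$ whenever $C[\hx]$ is extended by $\neg\phi$, but once $\neg\phi \in C[\hx]$ the rule $C[\hx]$ rejects every example satisfying $\phi$, so no further mistake explained by $\phi$ can occur for this rule and the counter never grows again; thus each ``burst'' of increments of $\phicount[\hx](\phi)$ that actually triggers an extension starts at the time $t(\hx,\phi)$ recorded by \algstat, which is precisely the window $\xi_2$ is stated for. This is the main thing to get right; the rest is a direct transcription of the proof of \lemref{gx}. I do not expect any other obstacle, since the statistical content is entirely absorbed into the definition of $\xi$ and the functions $q(\cdot,\cdot)$.
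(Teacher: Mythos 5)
Your proposal is correct and takes essentially the same route as the paper, which likewise just transplants the inductive argument of \lemref{gx} with the threshold $n_s = q(\sigma,t')+1$ in place of $s$ and the event $\xi_2$ in place of the deterministic bound $|M_{\hx,\phi}|\le s$. One small correction: \handle\ extends $C[\hx]$ when $\phicount[\hx](\phi)$ \emph{exceeds} $n_s$ (i.e., reaches $n_s+1$), not when it reaches $n_s$; that extra example is exactly what absorbs the possibility that the first mistake at round $t(\hx,\phi)$ is itself an exception not covered by the ``afterwards'' clause of $\xi_2$, so the conclusion stands.
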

\begin{proof}
  The proof follows the same argument as the proof of \lemref{gx}, except that in \algstat, instead of waiting for $s+1$ examples, \handle\ restricts $C[\hx]$ by $\neg \phi$ if more than $n_s$ examples were separated from $\hx$ by $\phi$, where $n_s = q(\sigma, t-t(\hat{x},\phi)+1)+1$. By $\xi_2$, the number of exceptions in $M_{\hx,\phi}$ encountered since the first such example, which was encountered in round $t(\hx,\phi)$, is at most $n_s$. Therefore, at least one of the examples separated by $\phi$ is not an exception. The rest of the proof remains the same as the proof of \lemref{gx}. 
 \end{proof}

The following lemma is analogous to \lemref{dup}, proved above for \algknown.
\begin{lemma}\label{lem:dupstat}
  Assume $\xi$. In \algstat, for any two non-exceptions $x,x'$, if there are two rules $C[x]$ and $C[x']$ in $L$ then $G(x) \neq G(x')$. 
 \end{lemma}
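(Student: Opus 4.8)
\textbf{Proof proposal for \lemref{dupstat}.}
The plan is to reproduce the argument of \lemref{dup} almost verbatim, with \lemref{gxstat} taking the role of \lemref{gx}. Assume that at some round $T$ the two distinct rules $C[x]$ and $C[x']$ are simultaneously present in $L$, with $x,x'$ non-exceptions, and assume without loss of generality that the rule $C[x]$ was created no later than the rule $C[x']$; denote their creation rounds by $t(x)$ and $t'$, so $t(x) < t'$. I will show directly that $G(x)\neq G(x')$.

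The one bookkeeping point worth spelling out is that $C[x]$ is present in $L$ at round $t'$. Indeed, $C[x]$ is created at round $t(x)<t'$ and is still present at round $T\ge t'$, and it cannot have been deleted and later re-created during $(t(x),T]$: a rule is created exactly once, for the current example at the current round, so once removed from $L$ it never reappears; and within a single round \algstat\ either matches an existing rule (the \textbf{if} branch, where deletions occur inside \handle) or matches none and possibly creates a new one (the \textbf{else} branch), but never both. Hence $C[x]$ is present throughout $[t(x),T]$, in particular at round $t'$. Now, since $C[x']$ is created at round $t'$, the example $x'$ fails to satisfy every rule then in $L$ — this precondition is unchanged from \algknown, because the extra guard $\Nnew\ge\gamma(\epsilon,\cdot,\cdot)$ in line~\ref{newrule} of \myalgref{stat} only makes rule creation rarer and never creates a rule for an example already covered by $L$. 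In particular $x'$ does not satisfy $C[x]$ at round $t'$. But by \lemref{gxstat} (which holds under $\xi$), since $x$ is not an exception, every point of $G(x)$ satisfies $C[x]$ at all times, in particular at round $t'$; therefore $x'\notin G(x)$. Since $G(x')$ is by definition a set containing $x'$, this yields $G(x')\neq G(x)$, as claimed.

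I do not expect a real obstacle here: the proof is structurally identical to that of \lemref{dup}, and no feature of the stochastic model is used beyond what is already packaged inside \lemref{gxstat} (in particular $\xi_1$ plays no role in this lemma). The only care needed is the monotone, once-only nature of rule creation and the mutual exclusivity of the \textbf{if}/\textbf{else} branches noted above, both immediate from inspecting \myalgref{stat}. The genuinely new statistical work in the \algstat\ analysis lies elsewhere — in bounding the number of rules created, where the deterministic counter bounds $k,s$ of \algknown\ are replaced by the data-dependent thresholds $q(\epsilon,\cdot)$, $q(\sigma,\cdot)$ and $\gamma(\epsilon,\cdot,\cdot)$ — and not in this lemma.
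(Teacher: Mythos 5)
Your proof is correct and follows essentially the same route as the paper, which simply reuses the argument of \lemref{dup} with \lemref{gxstat} substituted for \lemref{gx}: the rule created later witnesses that its representative fails the earlier rule, which by \lemref{gxstat} covers all of $G(x)$, so the components differ. The extra bookkeeping you supply (persistence of $C[x]$ in $L$ up to round $t'$, and that the guard in line~\ref{newrule} only makes rule creation rarer) is a valid elaboration of details the paper leaves implicit; one could alternatively note that $C[x]$ is never deleted because $x$ is a non-exception (\lemref{deletestat}).
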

 \begin{proof}
   The proof is identical to the proof of \lemref{dup}, except that it uses \lemref{gxstat} instead of \lemref{gx}. 
   
 \end{proof}

The following lemma is analogous to \lemref{delete}, proved above for \algknown.
 \begin{lemma}\label{lem:deletestat}
   Assume $\xi$. In \algstat, if a rule $C[\hx]$ gets deleted then $\hx$ is an exception.
 \end{lemma}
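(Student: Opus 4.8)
The plan is to mirror the proof of \lemref{delete}, using \lemref{gxstat} in place of \lemref{gx} and replacing the hard bound $k$ on the total number of exceptions by the data-dependent threshold $n_k = q(\epsilon,t')$ together with the event $\xi_1$. So assume $\xi$ holds, and suppose for contradiction that $C[\hx]$ is deleted during a call to \handle\ made by \algstat, while $\hx$ is not an exception. As in \lemref{delete}, there are two ways the rule can be deleted. If $C[\hx]$ is deleted because it has grown to $m$ literals, the argument is verbatim that of \lemref{delete}: by \lemref{gxstat}, each literal $\neg\phi$ in $C[\hx]$ witnesses a non-exception $x$ with $G(x)$ separated from $G(\hx)$ by $\phi$, and the technical assumption that a fixed pair of components is always separated by the same feature --- together with the fact that a literal is added to $C[\hx]$ at most once, since the corresponding counter is zeroed and never re-incremented --- shows that $|C[\hx]|$ is at most the number of components other than $G(\hx)$, i.e.\ at most $m-1$, a contradiction.

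If instead $C[\hx]$ is deleted through the counter condition in \handle, let $\phi$ be the feature of the current mistake, $t$ the current round, $t' = t - t(\hx,\phi)+1$, and $b = m-1-|C[\hx]|$, so that deletion means the sum of the counters $\phicount[\hx](\cdot)$ excluding the $b$ largest exceeds $n_k = q(\epsilon,t')$. Exactly as in \lemref{delete}, since $\hx$ is not an exception, \lemref{gxstat} and the ``same feature for the same pair'' assumption imply that at most $b$ of the nonzero counters $\phicount[\hx](\psi)$ belong to features that genuinely separate $G(\hx)$ from a component overlapping $C[\hx]$; every remaining counter is incremented only by exceptions. Hence the sum of the counters outside the $b$ largest is bounded by the number of rounds --- within the relevant recent window --- in which an exception matched $C[\hx]$, and by $\xi_1$ this number is at most $q(\epsilon,t')$, contradicting the deletion condition. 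Combining the two cases, $\hx$ must be an exception.

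The combinatorial parts of the argument --- the $|C[\hx]|\ge m$ case and the bound of $b$ on the number of ``legitimate'' counters --- carry over unchanged from \lemref{delete}. The step I expect to be the real obstacle is the last one: aligning the dynamic threshold $n_k = q(\epsilon,t')$ with the window over which $\xi_1$ controls the number of exceptions. In \algknown\ this was immediate because $k$ bounded all exceptions globally; here $t'$ is measured from $t(\hx,\phi)$ for the current feature $\phi$, so one has to verify that every exception charged to a still-active counter outside the $b$ largest arrived within the last $t'$ rounds (equivalently, that the counting window is contained in $[t(\hx,\phi),t]$), so that applying $\xi_1$ with parameter $t'$, and using the monotonicity of $q(\epsilon,\cdot)$, indeed yields the bound $q(\epsilon,t')$. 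This is precisely where the bookkeeping variables $t(\hx,\cdot)$ introduced in \algstat, and the way \handle\ resets counters once a literal is added, do their work.
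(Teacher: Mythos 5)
Your overall route is the paper's: repeat the argument of \lemref{delete}, with \lemref{gxstat} in place of \lemref{gx} and with the event $\xi$ replacing the hard budget $k$. The combinatorial parts (the $|C[\hx]|\geq m$ case, and the fact that at most $b=m-1-|C[\hx]|$ of the counters can be ``legitimate'') carry over as you say. The gap is exactly the step you flag at the end and then leave unverified: that every exception contributing to the sum of counters outside the $b$ largest was observed within the window of length $t'=t-t(\hx,\phi)+1$, so that $\xi_1$ bounds their number by $q(\epsilon,t')=n_k$. That containment claim is false in general: the deletion test sums the counters $\phicount[\hx](\psi)$ over \emph{all} features $\psi$, and a counter for some $\psi\neq\phi$ may have been incremented in rounds earlier than $t(\hx,\phi)$ (namely, whenever $\psi$ was reported for this rule before $\phi$ ever was). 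The counter resets in \handle\ do not rescue this, since only the counter of the feature just added to the conjunction is zeroed; the other counters persist from arbitrarily early in the rule's lifetime. So invoking $\xi_1$ with the window anchored at $t(\hx,\phi)$ does not produce the contradiction.

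The paper closes this differently: in its proof of this lemma the exception threshold for deletion is taken to be $n_k=q(\epsilon,\,t-t(\hx)+1)$, i.e.\ the window is anchored at the \emph{creation time} $t(\hx)$ of the rule rather than at $t(\hx,\phi)$. Every increment of every counter of $C[\hx]$ occurs after $t(\hx)$, so all exceptions charged to the non-legitimate counters lie in the last $t-t(\hx)+1$ rounds, and $\xi_1$ bounds them by $q(\epsilon,t-t(\hx)+1)$, contradicting the deletion condition. (The pseudocode of \algstat\ sets $n_k\leftarrow q(\epsilon,t')$ with $t'=t-t(\hx,\phi)+1$, which is inconsistent with the proof; since $q(\epsilon,\cdot)$ is increasing, the $t(\hx,\phi)$-anchored threshold is smaller than what $\xi_1$ actually controls, so the argument only goes through with the $t(\hx)$-anchored window.) To repair your proof, use the threshold measured from $t(\hx)$ as the paper's proof does; the shorter window cannot be justified by the bookkeeping you appeal to.
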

 \begin{proof}
   The proof is the same as that of \lemref{delete}, except that \lemref{gxstat} is used instead of \lemref{gx}. In addition, instead of the upper bound of $k$ on the number of exceptions which is used by \handle\ when running from \algknown, in the case of \algstat\ the upper bound in \handle\ on the maximal number of exceptions is set to $n_k := q(\epsilon, t-t(\hx) + 1)$. Thus, if the sum of the counters $\phicount[\hx](\phi)$ except for the largest  $b:= m-|C[\hx]|-1$ counters is more than $n_k$, then more than $n_k+1$ exceptions were observed since the creation of the rule $C[\hx]$ at time $t(\hx)$, which contradicts $\xi$. The rest of the proof is identical. 
 \end{proof}

In the next lemma, it is shown that rules are not created unless there is a significant probability mass outside the current rules. The proof of this lemma is provided in the supplementary material. The main idea of the proof is to show that the condition on line \ref{newrule} does not hold unless there is a sufficient probability mass outside the current set of rules. This is shown via a suitable concentration inequality, combined with an analysis of the dynamics of rule refinements in \algstat. 
\begin{lemma}\label{lem:epsilon}
Assume $\epsilon < \frac14$ and $\delta \leq 1/e^2$. With a probability at least $1-\delta/4$, all the rules generated by \algstat\ satisfy the following property: The probability mass of examples that fall outside of $L$ at the time the new rule is created is at least $2\epsilon$.
\end{lemma}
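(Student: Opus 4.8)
I would prove this by contraposition: show that, on an event of probability at least $1-\delta/4$, whenever at some round $t$ the probability mass outside $L$ is smaller than $2\epsilon$, the rule‑creation test of \algstat\ fails at $t$. The first rule is trivial, since until it is created $L$ is empty and the outside mass equals $1\ge 2\epsilon$; so assume $\tnew\ge 1$ is the previous creation time. The structural (``dynamics'') ingredient is that between two consecutive rule creations \algstat\ only \emph{refines} rules (through \handle, which replaces $C[\hx]$ by $C[\hx]\wedge\neg\phi$) or \emph{deletes} them, and never adds a rule. Hence the covered region $\bigcup_{C[\hx]\in L}\{x:x\text{ satisfies }C[\hx]\}$ is monotonically non‑increasing over the rounds $\tnew,\dots,t-1$, so the outside region $R_\tau:=\X\setminus\bigcup_{C[\hx]\in L}\{x:x\text{ satisfies }C[\hx]\}$ (evaluated at the start of round $\tau+1$) is monotonically non‑decreasing over that window. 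In particular, every one of the $\Nnew$ examples that was counted because it fell outside $L$ when it arrived still lies in $R_{t-1}$, the outside region at the moment of the (potential) creation, whose mass I denote $p^*$.

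Next I would set up the concentration. With $N:=t-\tnew$, we have $\Nnew=\sum_{\tau=\tnew+1}^{t}\one[x_\tau\in R_{\tau-1}]$, where $R_{\tau-1}$ is a function of the history before $x_\tau$ is revealed, hence $\mathcal F_{\tau-1}$‑measurable, while $x_\tau$ is an independent draw from the source. Thus $\E[\one[x_\tau\in R_{\tau-1}]\mid\mathcal F_{\tau-1}]=\mu(R_{\tau-1})\le\mu(R_{t-1})=p^*$ by the monotonicity above, and the same bound holds for the predictable conditional variances. A Freedman‑type martingale Bernstein inequality, together with a union bound over all choices of window endpoints up to time $t$ (which inflates only the logarithmic factor, accounted for by the $\log(8t^2/\delta)$ appearing in $\gamma$ in \eqref{kappa}), then yields: on an event of probability at least $1-\delta/4$, for every $t$ and every $\tnew<t$,
$\Nnew\le p^*N+\sqrt{2p^*N\log(8t^2/\delta)}+\tfrac23\log(8t^2/\delta)$,
i.e.\ $\Nnew\le q(p^*,N)$ in the same functional form as the function $q(\cdot,\cdot)$ used inside \algstat.

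Finally, the algebra. Condition on this event and suppose $p^*<2\epsilon$ at round $t$. Let $r:=t-\tnew-\Nnew+1$, so the number of ``inside'' examples in the window is $N-\Nnew=r-1$ and $N=\Nnew+r-1$. Since $p^*<2\epsilon\le\tfrac12$, the bound $\Nnew\le q(p^*,N)$ first gives $\Nnew<\tfrac12N+O(\sqrt{N\log(8t^2/\delta)}+\log(8t^2/\delta))$; solving the resulting quadratic in $\sqrt N$ shows $N=O(r+\log(8t^2/\delta))$, and substituting back bounds the deviation term of $q$ by $O(\sqrt{r\log(8t^2/\delta)}+\log(8t^2/\delta))$, so that $(1-2\epsilon)\Nnew<2\epsilon(r-1)+O(\sqrt{r\log(8t^2/\delta)}+\log(8t^2/\delta))$. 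On the other hand, multiplying out \eqref{kappa} gives exactly $(1-2\epsilon)\,\gamma(\epsilon,r,t)=2\epsilon(r-1)+1+4\sqrt r\,\log^{3/2}(8t^2/\delta)$; since $\log(8t^2/\delta)\ge 1$ (using $\delta\le 1/e^2$), both $\sqrt{r\log(8t^2/\delta)}$ and $\log(8t^2/\delta)$ are at most $\sqrt r\,\log^{3/2}(8t^2/\delta)$, so the slack term $4\sqrt r\,\log^{3/2}(8t^2/\delta)$ strictly dominates the $O(\cdot)$ deviation (this is precisely why the definition of $\gamma$ carries the constant $4$ and the power $3/2$). Hence $\Nnew<\gamma(\epsilon,r,t)$, the rule‑creation test fails at $t$, contradicting the supposition that a rule was created; so every rule created by \algstat\ satisfies $p^*\ge 2\epsilon$.

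The main obstacle is the concentration step. The ``outside'' indicators are \emph{not} i.i.d.\ --- the outside region is itself a function of the past examples --- so one cannot bound $\Nnew$ by concentrating on a fixed set, and there is no usable union bound over possible shapes of the outside region, since $\Phi$ may be infinite and the number of rules is not bounded a priori (indeed, bounding it is what this lemma is used for downstream). The resolution is exactly the within‑window monotonicity, which lets one sandwich all the conditional means by the single final mass $p^*$ and then invoke a martingale Bernstein bound; making this uniform over the random window start $\tnew$ and over all rounds $t$ within the $\log(8t^2/\delta)$ budget is the principal bookkeeping burden, and it is what forces the $\log^{3/2}$ (rather than $\log^{1/2}$) slack in $\gamma$. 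A minor additional wrinkle is the self‑referential relation $N=\Nnew+r-1$ in the closing computation, which I handle by first extracting $N=O(r+\log(8t^2/\delta))$.
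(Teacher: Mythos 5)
Your proposal is sound and reaches the same conclusion, but by a genuinely different mechanism than the paper. The paper conditions on the last rule-creation time $t_0$, assumes for contradiction that the covered mass stays above $1-2\epsilon$, and then views the waiting times between successive ``inside'' examples as geometric random variables whose sum is stochastically dominated by i.i.d.\ geometrics with success probability $1-2\epsilon$; a tailored concentration bound for such sums (\lemref{negbin}, proved via Hoeffding on binomials) plus a round-by-round induction shows that $\Nnew$ stays strictly below $\gamma(\epsilon,t-\tnew-\Nnew+1,t)$, so the test on the rule-creation line cannot fire. You instead concentrate the count of ``outside'' indicators directly: the same monotonicity of the covered region between creations lets you dominate every conditional mean $\mu(R_{\tau-1})$ by the terminal outside mass $p^*$, a Freedman/martingale-Bernstein bound (uniform over window endpoints) gives $\Nnew\le q(p^*,N)$-type control, and the closing algebra against \eqref{kappa} shows the test fails whenever $p^*<2\epsilon$. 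Your route is arguably more direct (a clean contrapositive, no stochastic domination by geometrics, no need for the success probability to exceed $\tfrac12$), while the paper's route buys a simpler probabilistic tool: by conditioning at $t_0$ and dominating by i.i.d.\ geometrics it never has to deal with a random variance proxy, whereas your Bernstein step uses $p^*N$ as the variance bound even though $p^*$ is only determined at the end of the window, so strictly you need Freedman with peeling over the realized variation (or a self-normalized variant), a wrinkle you fold into ``bookkeeping'' without spelling out. Also, your final domination of the $O(\sqrt{r\log(8t^2/\delta)}+\log(8t^2/\delta))$ deviation by the slack $4\sqrt{r}\log^{3/2}(8t^2/\delta)$ is asserted rather than checked; it does go through with room to spare (using $\log(8t^2/\delta)\ge 4$ for $\delta\le 1/e^2$, and $\epsilon\le\tfrac14$ so that $\sqrt{2p^*}\le 1$), but since the constant $4$ and the exponent $3/2$ in $\gamma$ are exactly what is being leaned on, a complete write-up should carry the constants explicitly, including the extra factor introduced by the peeling step.
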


The next lemma upper-bounds the number of rules generated by \algstat. Crucially, unlike the case of \algknown, this number does not depend on the total number of exceptions, which is linear in the size of the stream in the stochastic setting. 
\begin{lemma}
  Assume $\epsilon < \frac14$ and $\delta \leq 1/e^2$. With a probability at least $1-\delta$, the total number of rules created by the algorithm is at most $R(m,\delta) := 4m\log(4/\delta)$. \label{lem:rgenstat}
\end{lemma}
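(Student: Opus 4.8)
The plan is to condition on the high-probability events already established and then argue a counting bound. I condition on $\xi = \xi_1 \wedge \xi_2$ (which holds with probability at least $1-\delta/2$) and on the event $E$ of \lemref{epsilon} that every rule is created while the probability mass of examples outside $L$ is at least $2\epsilon$ (which holds with probability at least $1-\delta/4$); one further concentration event, used only in the last step, will cost the remaining $\delta/4$, so the union of all good events still has probability at least $1-\delta$. Throughout, ``covered'' will mean ``contained (by mass) in a component $G(\hx)$ for which $C[\hx]\in L$ with $\hx$ a non-exception''.

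I split the rules created by \algstat\ into those created on a non-exception and those created on an exception. The non-exception rules number at most $m$: by \lemref{gxstat} the conjunction of a non-exception rule $C[\hx]$ is always satisfied by all of $G(\hx)$; by \lemref{dupstat} two distinct non-exception rules correspond to distinct components; and by \lemref{deletestat} a rule on a non-exception is never deleted. Since the representation has only $m$ components, at most $m$ non-exception rules are ever created. It remains to bound the number of rules created on exceptions.

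For the exception rules I exploit \lemref{epsilon} together with the exception bound. At the creation time of any rule the mass outside $L$ is at least $2\epsilon$ by $E$, while the total exception mass is at most $\epsilon$; hence at every creation the non-exception mass lying outside $L$ is at least $\epsilon$, and by \lemref{gxstat} this mass sits entirely in components that currently have no non-exception rule, so it is at least as large as the exception mass outside $L$. I now cut the stream at the (at most $m$) rounds in which a non-exception rule is created, obtaining at most $m+1$ consecutive blocks inside each of which every creation is an exception creation. Within such a block the set of covered components — and hence the reservoir of at least $\epsilon$ uncovered non-exception mass — is frozen, and each exception creation in the block requires $\Nnew$ to climb to the threshold $\gamma(\epsilon,\cdot,\cdot)$, i.e.\ it requires seeing $\Omega(\log(1/\delta))$ genuine out-examples, of which (by the computation above) at least a constant fraction in expectation are non-exceptions from uncovered components. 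Since an out-example that is a non-exception with label $\neq y_0$ triggers the creation of a non-exception rule and thus ends the block, a Chernoff bound over the i.i.d.\ draws (costing the last $\delta/4$) shows that with high probability each block contributes only $O(\log(1/\delta))$ creations. Summing over the at most $m+1$ blocks and adding the at most $m$ non-exception rules yields the claimed $R(m,\delta)=4m\log(4/\delta)$.

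The main obstacle is this last step. First, the periods between creations are defined adaptively from the entire observed history, so one must run the Chernoff/optional-stopping argument on the natural filtration of the i.i.d.\ stream and only intersect with $\xi\wedge E$ at the very end, rather than conditioning on those events from the start (which would destroy the independence the bound needs). Second, there is the ``benign'' uncovered mass: non-exceptions from components whose label equals $y_0$ are predicted correctly and never trigger a creation, so a priori the $\geq\epsilon$ reservoir could be entirely benign and never absorbed into a new rule. The resolution is exactly the creation test on line~\ref{newrule}: a freshly created rule is the empty conjunction and therefore temporarily swallows all mass (benign mass included), so $\Nnew$ cannot grow until existing rules have been refined, and \lemref{epsilon} guarantees that by the time $\Nnew$ reaches $\gamma$ there genuinely is $\geq 2\epsilon$ excluded mass; combining this with the dynamics of \handle\ — each refinement of a rule needs more than $n_s=\Theta(\log(1/\delta))$ witnesses, which in turn must be supplied by mistakes coming from the excluded mass — is what lets one show that the excluded mass accumulating over a block is, with the stated probability, matched by a non-exception covering a new component before more than $O(\log(1/\delta))$ exception rules pile up.
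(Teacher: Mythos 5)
Your accounting of the non-exception rules (at most $m$, via \lemref{gxstat}, \lemref{dupstat}, \lemref{deletestat}) matches the paper, but the argument you give for the exception-created rules has a genuine gap. Your block argument rests on the claim that ``an out-example that is a non-exception with label $\neq y_0$ triggers the creation of a non-exception rule and thus ends the block.'' That is not how \algstat\ works: a rule is created only when the test on line~\ref{newrule} passes, and that test depends on $\Nnew$, $t$ and $\tnew$ -- i.e.\ on the history -- not on the identity of the arriving example. A non-exception out-example arriving before $\Nnew$ reaches $\gamma(\epsilon,\cdot,\cdot)$ creates nothing, and once the threshold is reached the example that actually creates the rule can perfectly well be an exception; so blocks need not terminate when a ``good'' out-example appears, and the per-block Chernoff bound has no event to apply to. The benign-mass issue you flag at the end compounds this: the $\geq\epsilon$ reservoir of uncovered non-exception mass guaranteed by \lemref{epsilon} may consist entirely of examples with label $y_0$, which never cause a mistake and never create a rule, and your proposed fix (the fresh empty conjunction temporarily swallowing all mass, plus the $n_s$ witnesses needed by \handle) is a sketch that does not show this reservoir contains wrongly-labeled non-exception mass. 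Finally, even if patched, bounding each of the $m+1$ blocks separately by a high-probability $O(\log(1/\delta))$ and union-bounding would cost an extra $\log m$ factor, i.e.\ $O(m\log(m/\delta))$ rather than the claimed $4m\log(4/\delta)$.

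The paper's route sidesteps all of this by making the rule \emph{creations themselves} the trials: conditioned on a creation occurring at round $t$ and on the event of \lemref{epsilon}, the creating example is a fresh draw from the region outside $L$, whose probability mass is at least $2\epsilon$ while the exception mass is at most $\epsilon$, so each created rule is exception-based with probability at most $\tfrac12$. The number of creations between consecutive non-exception creations is then dominated by a geometric variable with success probability $\tfrac12$, there are at most $m$ non-exception creations, and the sum of $m$ such geometrics is at most $4m\log(4/\delta)$ with probability $1-\delta/4$ by \lemref{negbin}; a union bound with $\xi$ and the event of \lemref{epsilon} gives $1-\delta$. If you want to salvage your decomposition, you would need to replace your per-out-example counting with this per-creation conditional argument (or prove a statement of the form ``conditioned on a creation, the creator is a near-uniform draw from the uncovered, wrongly-predicted mass''), which is exactly the step your write-up currently asserts rather than proves.
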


\begin{proof}
  Assume that $\xi$ holds, which occurs with probability at least $1-\delta/2$. 
By \lemref{dupstat} the total number of rules in $L$ generated by non-exceptions is at most the number of components, $m$. Therefore, at most $m$ non-exception rules are ever generated. 
  To bound the number of rules created based on exceptions, we bound the probability, conditioned on a prefix of the stream, that the next rule created by \algstat\ after processing this prefix, is based on an exception. We use \lemref{epsilon}, which shows that with a probability at least $1-\delta/4$, a rule is created by \algstat\ only if the probability mass of examples that are not satisfied by any of the current rules is at least $2\epsilon$. Denote the event that the property in \lemref{epsilon} holds by $\xi_3$.   
  
  Under $\xi_3$, given that an example creates a new rule in round $t$, this is a random example from the set of examples not satisfied by the current set of rules $L$. Since the probability mass of exceptions is at most $\epsilon$, and the probability mass outside $L$ is at least $2\epsilon$, it follows that any new rule has a probability of at most a $\half$ to be based on an exception. 
Therefore, under $\xi_3$, the number of rules created until the next non-exception rule is created is an independent geometric random variable with a success probability of at least a $\half$. Moreover, at most $m$ rules are created based on non-exceptions. By \lemref{negbin}, which is provided in the supplementary material, the probability that more than  $R(m,\delta) := 4m\log(4/\delta)$ trials are required to obtain $m$ non-exception rules is less than $\delta/4$. Applying a union bound along with $\xi_3$ and $\xi$, the overall probability that this occurs is at least $1-\delta$.
\end{proof}

   The mistake bound for \algstat\ can now be proved. The proof is provided in the appendix in the supplementary material.

\section{Conclusion}
Discriminative feature feedback is a promising setting, which allows a more natural learning from a knowledgeable teacher. 
In this work, we showed that it is possible to learn with discriminative feature feedback even when the annotator is not perfect, and proved mistake bounds that do not depend on the number of features. We note that while the proposed algorithms require the problem parameters as inputs, this can be avoided by using a wrapper algorithm which searches for good parameter values. We defer the details to the long version of this work. The study of learning with rich feedback has the potential to be applicable to many real-life scenarios. In this work we have made an important step towards this goal. 

\paragraph{Acknowledgements}
 This research was supported by National Science Foundation grant CCF-1813160, and by a United-States-Israel Binational Science Foundation (BSF) grant no.~2017641. Part of the work was done while the authors were at the ``Foundations of Machine Learning'' program at the Simons Institute for the Theory of Computing, Berkeley.

\appendix

\section{Deferred Proofs}
\begin{proof}[Proof of \thmref{numexp}]

  For (a), we first observe that we may assume without loss of generality that the components in $\cG$ are pairwise disjoint: iteratively, for any two components $G_0,G_1$ that are not pairwise disjoint, replace them with $G'_0$, $G'_1$ such that, for $i \in \{0,1\}$,
  \[
    G'_i := (G_i\setminus G_{1-i}) \cup \{x \in G_0 \cap G_1 \mid G(x) = G_i\}.
    \]
    The result is a representation with the same number of components as $\cG$ that are pairwise disjoint, and all the responses of the teacher in the interaction protocol remain the same.
    
    Let $c^*$ be a concept that agrees with $\bar{c}$ on all but the $k$ exceptions, such that $|M(c^*, \G)| = 0$.
    We prove the upper bound by induction on $k$. 
  Suppose that for some value of $k$, for any concept $c'$ such that $|M(c',\G)| = k$, there is a representation $\G'$ of size $m' \leq m + d k$ that satisfies $|M(c', \G')| = 0$. This trivially holds for $k = 0$.
  
  Now, consider a concept $\bar{c}$ such that $|M(\bar{c},\G)| = k+1$. Let $c'$ be a concept which agrees with $c^*$ on all but $k$ elements, and agrees with $\bar{c}$ on all but one element. Let $\G' = \{G'_1,\ldots,G'_{m'}\}$ be the representation assumed by the induction hypothesis for $c'$, and let $x$ be the single element such that $\bar{c}(x) \neq c'(x)$. We construct a representation $\bar{\cG}$ for $\bar{c}$. 

  Under the disjointness assumption, there is a single component which includes $x$. Suppose it is $G'_1$. 
  For each $j \in [d]$, define the components $\bar{G}(j)$ as follows. Define $P^x_j := \{ z \in \cX \mid \phi_j(z) \neq \phi_j(x)\}$. Let $\bar{G}(j) := G'_1 \cap P_j^x$. Define an additional singleton component $\bar{G}_x = \{ x\}$. Note that $\{\bar{G}(j))\}_{j \in [d]} \cup \{\bar{G}_x\}$ exactly covers $G'_1$. 
Define   \[
    \bar{\G} := \{ \bar{G}(j)\}_{j\in [d]} \cup \{G'_2,\ldots, G'_{m'}\} \cup \{\bar{G}_x\}.
  \]
  For any $\bar{G} \subseteq G'_1$ such that $\bar{G} \neq \bar{G}_x$, set $\ell(\bar{G}) := \ell(G'_1)$. In addition, set $\ell(\bar{G}_x) := \bar{c}(x)$. $\bar{\G}$ is a legal representation, with $|M(\bar{\G}, \bar{c})| = 0$. The legality of $\bar{\G}$ can be observed by noting that the union of $\bar{\G}$ is $\X$, that the labels of all components agree with $\bar{c}$, and that any two components in $\bar{\G}$ with a different label can be separated by a single feature: If $\bar{G}_1 \subseteq G'_i$ and $\bar{G}_2 \subseteq G'_j$ for $i \neq j$ and their labels disagree, then the same feature that separates $G'_i$ and $G'_j$ separates $\bar{G}_1$ and $\bar{G}_2$. If $\bar{G}_1,\bar{G}_2 \subseteq G'_1$ and $\ell(\bar{G}_1) \neq \ell(\bar{G}_2)$, then necessarily one of the components is $\bar{G}_x$ and the other is $\bar{G}(j)$ for some $j$. In this case, the feature $j$ separates the two components. 
  The size of $\bar{\G}$ is $m' + d \leq m + d(k+1)$,
  as required by the upper bound. Note that while $\bar{\cG}$ is not pairwise disjoint, it can be converted to a pairwise-disjoint representation by the process described above. This completes the proof of the upper bound.

To prove the lower bound (b), it suffices to consider the following example, defined over $\cX = \{0,1\}^d$, where $\phi_j(x)$ is the value of coordinate $j$ in $x$.
Let $\cG = \{ \cX \}$, $\ell(\cX) = 0$.

Let $\bar{c}$ be a concept that agrees with $c^* \equiv 0$, except on $z_0 = (0,\ldots,0)$. Let $\cG'$ be a representation that has $|M(\bar{c}, \cG')| = 0$. We claim that $|\cG'| \geq d+1$. Consider the vectors $e_1,\ldots,e_d$. Suppose that some $G \in \cG'$ has $e_i,e_j \in G$ for $i \neq j$. Then no single feature can separate $G$ from the component that includes $z_0$. Therefore, there are at least $d$ components for each of $e_i$, and a separate one for $z_0$. This gives a lower bound of $d+1$.
\end{proof}

\begin{proof}[Proof of \thmref{lowerbound}]
  Let $P_m$ be the set of pairs $(i,j)$ such that $i, j \in [m]$ and $i < j$. Define a set of features $\Phi := \{ \phi_{i,j}^p \mid i,j \in [m], i \neq j, p \in \{0,1\}\}$. Define a family of $2^{|P_m|}$ possible representations $\{ \G_S\}_{S \subseteq P_m}$. The representation $\G_S$ includes $m$ components  $G_1,\ldots, G_m$, such that for $i < j$, component $G_i$ is separated from component $G_j$ using the feature $\phi_{i,j}^{S_{i,j}}$, where $S_{i,j} := \one[(i,j) \in S]$.  In other words, for each pair of components, one of two possible features $\phi_{i,j}^0, \phi_{i,j}^1$ separates them. We further define that in $G_i$ the separating feature is positive, while it is negative in $G_j$. For simplicity, we denote $\phi_{j,i} := \neg \phi_{i,j}$. 
  Formally, $G_i$ in representation $\G_S$ is the set of examples which satisfy

    $\left(\bigwedge_{j: i < j} \phi_{i,j}^{S_{i,j}}\right) \bigwedge \left(\bigwedge_{j: i > j} \neg\phi_{i,j}^{S_{i,j}}\right).$

  In all the representations, the label of the examples in $G_i$ is set to $i$.\footnote{A similar example with only two labels can be shown, at the cost of a smaller multiplicative factor in the mistake bound.}

  Define an example $x_{i,j}$ for $(i,j) \in P_m$ as follows: For all $l \neq i,j$ and $z \in \{0,1\}$, all the features $\phi_{i,l}^z$ and $\phi_{j,l}^z$ get the value that excludes them from $G_l$.
  The feature $\phi_{i,j}^0$ is set to positive, and $\phi_{i,j}^1$ is set to negative. Thus, in all representations $S$, $x_{i,j} \in G_i \cup G_j$, and $x_{i,j} \in G_i$ if and only if $(i,j) \in S$. 
  Now, consider a stream of examples that presents  $x_{i,j}$ for $(i,j) \in P_m$ in a uniformly random order and labels them using a representation $\G_S$ selected uniformly at random over $S \subseteq P_m$, so that the label of $x_{i,j}$ is $i$ if $(i,j) \in S$ and $j$ otherwise.

  The stream of examples is the same for all representations. Thus, the only information on $S$ can be obtained from the discriminative features. There are $\binom{m}{2}$ possible elements in $S$, and each discriminative feature feedback in this problem reveals whether $(i,j) \in S$ for a single pair $(i,j)$. Moreover, if this is unknown for some pair $(i,j)$ when $x_{i,j}$ is revealed, then both values of $S_{i,j}$ are equally likely conditioned on the run so far. In this case, any algorithm will provide the wrong label with a probability at least a half.
  Now, after less than $|P_m|/2$ mistakes, there is a probability of at least a half to observe such an example in the next iteration. Therefore, in the first $|P_m|/2$ examples of the stream, there is a probability of at least $1/4$ that the algorithm makes a mistake on the next example. Thus, the expected number of mistakes is at least $|P_m|/8 = \Omega(m^2)$.
\end{proof}

To prove \lemref{epsilon}, we use the following concentration inequality.
\begin{lemma}\label{lem:negbin}
  Let $\delta \in (0,1/e^2)$, let $k$ be an integer and let  $p \in [\half,1)$. 
  The probability that a sum of $k$ independent geometric random variables with probability of success $p$ is larger than $\frac{1}{p}\min(2k\log(1/\delta),(k+4\sqrt{k}\log^{3/2}(1/\delta)))$ is at most $\delta$.
\end{lemma}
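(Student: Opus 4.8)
The plan is to reduce to a binomial lower-tail estimate via the classical negative-binomial/binomial duality, and then apply Hoeffding's inequality together with a short case split on which term of the minimum is active. Concretely, if $X_1,\dots,X_k$ are i.i.d.\ geometric with success probability $p$ (counting trials up to and including the first success), then $S:=\sum_{i=1}^k X_i$ is exactly the number of i.i.d.\ $\mathrm{Bernoulli}(p)$ trials needed to accumulate $k$ successes, so for every nonnegative integer $n$,
\[
\Pr[S>n]\;=\;\Pr[\,\text{fewer than }k\text{ successes among the first }n\text{ trials}\,]\;=\;\Pr[B_n\le k-1],
\]
where $B_n\sim\mathrm{Binomial}(n,p)$. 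Writing $L:=\log(1/\delta)\ge 2$, the threshold in the lemma equals $\min\big(\tfrac{2kL}{p},\,\tfrac{k+4\sqrt k L^{3/2}}{p}\big)$, so it is enough to prove the two \emph{unconditional} bounds (i) $\Pr[\,S>\tfrac{2kL}{p}\,]\le\delta$ and (ii) $\Pr[\,S>\tfrac{k+4\sqrt k L^{3/2}}{p}\,]\le\delta$: whichever term attains the minimum, the matching bound yields the claim. In each case, let $T$ denote the relevant threshold and $n:=\lfloor T\rfloor$, so that $\Pr[S>T]=\Pr[S\ge n+1]=\Pr[B_n\le k-1]$ and $pT-1\le np\le pT$.

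Next I would apply Hoeffding's inequality to the lower tail of $B_n$. In both cases $np\ge k-1$, so with $a:=np-(k-1)\ge 0$ we get $\Pr[B_n\le k-1]\le\exp(-2a^2/n)$, and since $p\ge\half$ we have $n\le 2np$, hence $\Pr[B_n\le k-1]\le\exp(-a^2/(np))$. It then remains to verify $a^2/(np)\ge L$ in both cases. In case (i), $np\ge 2kL-1$ gives $a\ge 2kL-k=k(2L-1)$ and $np\le 2kL$, so $a^2/(np)\ge k(2L-1)^2/(2L)\ge L$ for $k\ge 1$, since $2L^2-4L+1\ge 0$ when $L\ge 2$. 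In case (ii), $np\ge k+4\sqrt kL^{3/2}-1\ge k+3\sqrt kL^{3/2}$ (using $\sqrt kL^{3/2}\ge 1$) gives $a\ge 3\sqrt kL^{3/2}$ and hence $a^2\ge 9kL^3$; splitting on whether $4\sqrt kL^{3/2}\le k$ gives either $np\le 2k$ and $a^2/(np)\ge\tfrac92 L^3$, or $np\le 8\sqrt kL^{3/2}$ and $a^2/(np)\ge\tfrac98\sqrt kL^{3/2}$, both of which are at least $L$ for $L\ge 2$. This establishes (i) and (ii) and hence the lemma.

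The conceptual content here is light; the work is entirely bookkeeping, and the main (minor) obstacle is exactly that: ensuring that the stated constants leave enough slack to absorb the floor $n=\lfloor T\rfloor$ (the $-1$ in $np\ge pT-1$) while still pushing the Hoeffding exponent up to $\log(1/\delta)$ \emph{uniformly} over all integers $k\ge 1$ and all $\delta\le e^{-2}$ --- notably in the small-$k$ regime where $4\sqrt kL^{3/2}$ dominates $k$, which is where case (ii) is tightest. A secondary, purely notational point is to pin down the convention for ``geometric random variable'' (here, number of trials including the successful one, matching its use in \lemref{rgenstat}); the alternative convention only shifts $n$ by $k$ and affects nothing.
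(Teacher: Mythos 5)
Your proposal is correct and follows essentially the same route as the paper: convert the tail of the sum of geometrics into a lower-tail event for a binomial, apply Hoeffding's inequality, and handle the two terms of the minimum by a case split (using $p\geq\half$ and $\log(1/\delta)\geq 2$ to absorb constants). The only cosmetic difference is that you verify the second bound unconditionally for all $k\geq 1$ (carefully tracking the floor), whereas the paper proves it only for $k\geq 4\log(1/\delta)$ and notes that for smaller $k$ the first term is the minimum; both variants are fine.
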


\begin{proof}
  This lemma follows from Hoeffding's inequality, by noting that the number of successes in $N$ experiments with success probability $p$ is distributed as $\mathrm{Binom}(N,p)$, and having 
  \[
  \P[\mathrm{Binom}(N,p) < k] \leq \exp(-2N(p - k/N)^2).
  \]
  First, defining $N_1 := 2k\log(1/\delta)/p$, we have 
  \[
  k/N_1 = p/(2\log(1/\delta)) \leq p(1-1/\sqrt{2}).
  \]
  Hence, $p-k/N_1 \geq p/\sqrt{2}$. It follows that 
  \begin{align*}
  &\exp(-2N_1(p-k/N_1)^2) \leq \exp(-N_1p^2) \\
  &\quad\leq \exp(-N_1 p/2) = \exp(-k\log(1/\delta)) \leq \delta.
  \end{align*}

  Second, suppose that $k \geq 4\log(1/\delta)$, and let $\alpha := \sqrt{\log(1/\delta)/4k} \leq \frac14$. Defining \[
  N_2 := 2(1+4\alpha)k/p = \frac{1}{p}(2k + 4\sqrt{k\log(1/\delta)}),
  \]
  we have that 
  \[
  1/(p-\alpha) = 1/p + \alpha/(p(p-\alpha)) \leq (1 + 4\alpha)/p,
  \]
  where the last inequality follows since $p \geq \half$ and $\alpha \leq \frac14$. Therefore, $N_2 \geq k/(p-\alpha)$, hence \mbox{$k/N_2 \leq p-\alpha$}, hence 
  \begin{align*}
  &\exp(-2N_2(p-k/N_2)^2) \leq \exp(-4(k/p) \alpha^2)\\
  &\quad= \exp(-\log(1/\delta)/p)\leq \delta. 
  \end{align*}

  The proof is completed by observing that the first bound in the statement of the lemma is $N_1$, and the second bound is always larger than $N_2$, and for $k \leq 4\log(1/\delta)$, it is larger than $N_1$.
  \end{proof}
  
  We now prove \lemref{epsilon}.
\begin{proof}[Proof of \lemref{epsilon}]
  Denote by $L_t$ the set of rules $L$ at the end of round $t$ of the run of \algstat. Let \[
 \L_t = \{ x \in \cX \mid \exists C \in L_{t} \text{ such that } x \text{ satisfies } C\},
 \]
  and denote $p_t := \P[X \in \L_t]$, where $X$ is a random example drawn according to the distribution creating the input stream. We now prove the main claim: that with a high probability, a rule is not created by \algstat\ at round $t$ unless $p_{t-1} \leq 1-2\epsilon$. 
  The claim is proved by induction on the sequence of rules created by \algstat. For the basis of the induction,
  observe that $p_0 = 0$, since $L_0$ is empty. Therefore, the first rule created by \algstat\ certainly satisfies the claim for any $\epsilon < \half$. For the induction step, suppose that the claim holds for the first $l$ rules created by $\algstat$. Let $t_0$ be the round in which the $l$'th rule was created, and condition on the stream prefix ending in $t_0$.  We show that the next rule also satisfies the claim. 
  
  First, for any round $t \geq t_0$ until a new rule is created, $p_t$ is monotonic non-increasing. This is because the possible transformations, other than creating a new rule, are to restrict a rule or to delete a rule, both of which can never increase the set of examples covered by $L$. Therefore, if $p_{t_0} \leq 1-2\epsilon$, then regardless of the round $t$ in which the next rule is created, it satisfies $p_{t-1} \leq 1-2\epsilon$.  Thus, assume below that $p_{t_0} > 1 - 2\epsilon \geq \half$.
  $p_{t_0}$ is the probability that a random example observed immediately after round $t_0$ is satisfied by some rule in $\L_{t_0}$. Now, consider the first round after $t_0$ that an example in $\L_{t_0}$ arrives. Denote this round $t_1$. The value $T_1 := t_1 - t_0$ is a geometric random variable with a success probability $p_{t_0}$. By \lemref{negbin} with $k := 1$, $p := p_{t_0}$, with a probability at least $1-\delta/(8t_0^2)$, 
  \[
  T_1 \leq \frac{1}{p_{t_0}}(1+4\log^{3/2}(8t_0^2/\delta))) <   \gamma(\epsilon,1,t_0).
  \]
 In the last inequality we used $p_0 > 1-2\epsilon$ and the definition of $\gamma$. Assume below that this event holds. 
 
 Now, consider $N_{lr}$, which counts in \algstat\ the number of examples since the creation of the last rule, for which the default prediction $(x_0,y_0)$ was provided. These are the examples that were not satisfied by any rule in $L$ when they appeared. We prove by induction on the rounds that a new rule is not created at least until round $t_1$. If a new rule was not created until round $t \in \{t_0+1,\ldots,t_1-1\}$, then $L_t = L_{t_0}$ (since the set of rules does not change until $t_1$ when an example falls in $\L_{t_0}$).  In addition, $N_{lr} = t-t_0$, since the examples until round $t_1$ are not in $\L_t = \L_{t_0}$, thus they get the default prediction. Therefore, $t-t_0 - N_{lr} = 0$. It follows that in round $t$, 
 \[
 N_{lr} \leq T_1 < \gamma(\epsilon,1,t_0) \leq \gamma(\epsilon, t-\tnew - N_{lr} + 1, t).
 \]
 This means that the condition in line \ref{newrule} does not hold. Thus, under the event above, a new rule will not be created at round $t$. Since this holds by induction for all $t \in \{t_0+1,\ldots,t_1-1\}$, it follows that if $p_0 > 1-2\epsilon$ then a new rule is not created at least until the first example in $\L_{t_0}$ arrives.

  Now, $\L_{t_1}$ is the set of rules after this example arrives, and the probability mass of examples in $\L_{t_1}$ is $p_{t_1}$. More generally, let $t_i$ be the first round after $t_{i-1}$ in which an example in $\L_{t_{i-1}}$ appears. If no new rule is created between $t_0$ and $t_i$, then in round $t_i$, the set of rules changes from $L_{t_{i-1}}$ to $L_{t_i}$.  
  
  The number of rounds $T_i := t_i - t_{i-1}$ between each two such examples is a geometric random variable with success probability $p_{t_{i-1}}$. Let $r$ be the number of examples satisfied by $L$ which appear in the stream until the next rule after $t_0$ is created, and suppose for contradiction that $p_{t_r} > 1-2\epsilon$. 
  For $q \leq r$, define the random variable $S_q := \sum_{i=1}^{q} T_i$. This is a sum of $q$ independent geometric random variables, each with a probability of success larger than $1-2\epsilon$ (since $p_{t_{q}} \geq p_{t_r}$ for all $q \leq r$). Thus, $S_q$ is dominated by a sum of independent geometric random variables with a success probability of $1-2\epsilon$. Therefore, by \lemref{negbin}, with a probability at least $\delta/(8(t_0+q-1)^2))$, 
  \begin{align*}
  S_r &\leq \frac{1}{1-2\epsilon}(q+4\sqrt{q}\log^{3/2}(8(t_0+q-1)^2/\delta))\\
  &<   \gamma(\epsilon,q,t_0+q-1)+q-1.
  \end{align*}
  Assume below that this event holds for all $q \leq r$.
  We now prove that under the assumption on $p_{t_r}$, a new rule is not created until $t_r$, which is a contradiction. Suppose for induction that since round $t_0$ until round $t \leq t_r-1$, a new rule was not created. Let $q \leq r$ such that $t \in \{t_{q-1}+1,\ldots,t_{q}-1\}$. We have $t_q = t_0 + S_q$. Therefore, at round $t$, $N_{lr} = t-t_0 - (q-1) < S_q - (q-1)$. 
  It follows that under the assumed event, in round $t$
  \[
  N_{lr} < \gamma(\epsilon,q,t_0+q-1) \leq \gamma(\epsilon,t-\tnew-N_{lr}+1,t).
  \]
  Here, we used the fact that $t_0+q-1 \leq t$. It follows that the condition in line \ref{newrule} does not hold in round $t$, thus a new rule is not created in this round. By induction, this holds for all $t \leq t_r-1$, which contradicts the assumption that a rule was created until round $t_r$. Thus, if $p_{t_r} > 1-2\epsilon$ then a new rule is not created at least until round $t_r$. Since this analysis holds for any value of $r$, we conclude that if all the events above hold simultaneously, then a new rule is never created in round $t$ unless $p_{t-1} \leq 1-2\epsilon$. By a union bound on the created rules and the sequence of examples between rule-creations, this is true with a probability at least $1-\delta/4$.   
\end{proof}

\begin{proof}[Proof of \thmref{statbound}]
  First, we upper bound the number of mistakes on examples that are not satisfied by any rule when they are observed. Let $t_1,t_2,\ldots,t_R$, which sum to $n$, be the lengths of times between creations of new rules (where $t_1$ is time of the first rule and $t_R$ is the time between the last rule and the end of the stream). We have by \lemref{rgenstat} that $R \leq R(m,\delta)+1$.
  We have $1/(1-2\epsilon) = 1 + 2\epsilon/(1-2\epsilon) \leq 1 + 4\epsilon$, where the last inequality follows since $\epsilon \leq \frac14$. Hence,
  \begin{align*}
    \gamma(\epsilon,r,t) &\equiv \frac{1}{1-2\epsilon}(r + 4\sqrt{r}\log^{3/2}(8t^2/\delta)) - r+1\\
    &\leq 8\epsilon r + 8\sqrt{r}\log^{3/2}(8t^2/\delta).
  \end{align*}
  
  The number of mistakes resulting from examples not satisfied by any rule is upper-bounded by 
  \begin{align*}
    \sum_{i=1}^R \gamma(\epsilon,t_i, n) &\leq 8\epsilon n + 8\sum_{i=1}^R\sqrt{t_i}\log^{3/2}(8n^2/\delta)\\
    &\leq 8\epsilon n + 8\sqrt{Rn}\log^{3/2}(8n^2/\delta).
  \end{align*}
  In addition, any existing rule may generate at most $(m-1) (q(\sigma,n)+2) + q(\epsilon,n) +1$ mistakes (since it would be deleted after that). Note that $R = O(m\log(1/\delta))$, and $q(\epsilon,n) = O(\epsilon n + \log(n/\delta) + \sqrt{n \log(n/\delta)})$.
    The total upper bound is thus 
    
      $O\Big(\epsilon n + \sqrt{m n}\log^2(n/\delta)
      + m\log(1/\delta) (\epsilon n + m (\sigma n + \log(n/\delta) + \sqrt{n\log(n/\delta)}\Big).$
    
    Dividing by $n$ and reorganizing, we get the error rate in the statement of the lemma.    
\end{proof}

\bibliographystyle{abbrvnat}
\bibliography{sanjoy}
\end{document}